\newif\ifarxiv
\ifarxiv \usepackage{authblk}
\newtheorem{theorem}             {Theorem}
\newtheorem{lemma}      [theorem]{Lemma}
\newtheorem{corollary}  [theorem]{Corollary}
\newtheorem{definition} [theorem]{Definition}
\newcommand{\fprop}{level function\xspace}
\newcommand{\prob}[1]{\Pr\left(#1\right)}
\newcommand{\probt}[1]{\Pr{}_t\left(#1\right)}
\newcommand{\expect}[1]{E\left[#1\right]}
\newcommand{\expectt}[1]{E_t\left[#1\right]}
\newcommand{\indfe}[1]{\ensuremath{\mathds{1}_{#1}}}
\newcommand{\indf}[1]{\indfe{\{#1\}}}
\newcommand{\Real}{\mathbb{R}}
\DeclareMathOperator{\poly}{poly}
\DeclareMathOperator{\bin}{Bin}
\DeclareMathOperator{\bernoulli}{Bernoulli}
\DeclareMathOperator{\unif}{Unif}
\newcommand{\filtuc}[1]        {\mathscr{F}_{#1}}
\newcommand{\filt}[1]        {\filtuc{#1}}
\newcommand{\ab}{\hspace{0.125em}}                        \newcommand{\ie}{\hbox{i.\ab e.}\xspace}                  \newcommand{\eg}{\hbox{e.\ab g.}\xspace}                  \newcommand{\wrt}{\hbox{w.\ab r.\ab t.}\xspace}           \newcommand{\wolg}{\hbox{w.\ab o.\ab l.\ab g.}\xspace}
\newcommand{\bigO}[1]{\ensuremath{\mathcal{O}\left(#1\right)}}
\newcommand{\oneplusoneEA}{($1$+$1$)\ab{}EA\xspace}
\newcommand{\muplusmuEA}{($\mu$+$\mu$)\ab{}EA\xspace}
\newcommand{\muplusoneEA}{($\mu$+$1$)\ab{}EA\xspace}
\newcommand{\muplusoneGA}{($\mu$+$1$)\ab{}GA\xspace}
\newcommand{\onepluslambdacommalambdaGA}{($1$+($\lambda$,$\lambda$))\ab{}GA\xspace}
\newcommand{\sel}{\ensuremath{\mathrm{Sel}}\xspace}
\newcommand{\mut}{\ensuremath{\mathrm{Mut}}\xspace}
\newcommand{\xor}{\ensuremath{\mathrm{Cross}}\xspace}
\newcommand{\psel}{\ensuremath{p_\mathrm{sel}}\xspace}
\newcommand{\pmut}{\ensuremath{p_\mathrm{mut}}\xspace}
\newcommand{\pxor}{\ensuremath{p_\mathrm{xor}}\xspace}
\newcommand{\exchg}{\ensuremath{\mathrm{Exchange}}\xspace}
\newcommand{\genbeta}{\ensuremath{\zeta}\xspace}
\newcommand{\leadingones}{\text{\sc LeadingOnes}\xspace} \newcommand{\lo}{\text{\sc Lo}\xspace}
\newcommand{\onemax}{\text{\sc OneMax}\xspace} \newcommand{\om}{\text{\sc Om}\xspace}
\newcommand{\jump}{\text{\sc Jump}\xspace} \newcommand{\unimodal}{\text{\sc Unimodal}\xspace} \newcommand{\linear}{\text{\sc Linear}\xspace} \newcommand{\sortinginv}{\text{\sc Sorting\textsubscript{Inv}}\xspace} \newcommand{\inv}{\text{\sc Inv}\xspace} \newcommand{\rr}{\text{\sc RoyalRoad}\xspace} \newcommand{\maxsat}{\text{\sc Max-SAT}\xspace} \newcommand{\maxcut}{\text{\sc Max-CUT}\xspace}
\begin{document}

\title{Level-Based Analysis of Genetic Algorithms and Other Search Processes}

\ifarxiv
\author[1]{Dogan~Corus\footnote{Dogan Corus' contributions to this
    paper was made while he was a PhD student at the University of Nottingham.}}
\author[2]{Duc-Cuong~Dang}
\author[3]{Anton~V.~Eremeev}
\author[2]{Per~Kristian~Lehre}
\affil[1]{University of Sheffield, United Kingdom}
\affil[2]{University of Nottingham, United Kingdom}
\affil[3]{Omsk Branch of Sobolev Institute of Mathematics, Russia}
\else
\author{Dogan~Corus,
        Duc-Cuong~Dang,
        Anton~V.~Eremeev,
        and Per~Kristian~Lehre
\thanks{D. Corus is with the Algorithms Research Group         at the Department of Computer Science,
        University of Sheffield,
        United Kingdom.}
\thanks{D.-C. Dang and P. K. Lehre are with ASAP Research Group
        at the School of Computer Science, University of Nottingham,
        United Kingdom.}
\thanks{A. V. Eremeev is with the Chair of Applied and Computational Mathematics,
        Omsk State University
        and  Laboratory of Discrete Optimization,
        Omsk Branch of Sobolev Institute of Mathematics,
        Russia.}
\thanks{Manuscript received         ; revised }}
\markboth{IEEE TRANSACTIONS ON EVOLUTIONARY COMPUTATION, AUTHOR-PREPARED MANUSCRIPT}
{IEEE TRANSACTIONS ON EVOLUTIONARY COMPUTATION, AUTHOR-PREPARED MANUSCRIPT}
\fi

\maketitle

\begin{abstract}
Understanding how the time-complexity of evolutionary algorithms
(EAs) depend on their parameter settings and characteristics of
fitness landscapes is a fundamental problem in evolutionary
computation.  Most rigorous results were derived using a handful of
key analytic techniques, including drift analysis.  However, since few
of these techniques apply effortlessly to population-based EAs, most
time-complexity results concern simplified EAs, such as the (1+1) EA.

This paper describes the \emph{level-based theorem}, a new technique
tailored to population-based processes.  It applies to any
non-elitist process where offspring are sampled independently from a
distribution depending only on the current population.  Given
conditions on this distribution, our technique provides upper
bounds on the expected time until the process reaches a target state.

We demonstrate the technique on several pseudo-Boolean functions,
the sorting problem, and approximation of optimal solutions in
combinatorial optimisation. The conditions of the theorem are
often straightforward to verify, even for Genetic Algorithms and
Estimation of Distribution Algorithms which were considered highly
non-trivial to analyse.  Finally, we prove that the theorem is
nearly optimal for the processes considered. Given the information
the theorem requires about the process, a
much tighter bound cannot be proved.

\end{abstract}

\ifarxiv
\else
\begin{IEEEkeywords}
Runtime Analysis,
Genetic Algorithm,
Estimation of Distribution Algorithm,
Approximation
\end{IEEEkeywords}
\IEEEpeerreviewmaketitle
\fi

\section{Introduction}\label{sec:intro}

\ifarxiv 
The
\else
\IEEEPARstart{T}{he} 
\fi 
theoretical understanding of Evolutionary
Algorithms (EAs) has advanced significantly over the last decade.  A
contributing factor for this success may have been the strategy to
analyse simple settings before proceeding to more complex scenarios,
while at the same time developing appropriate analytic techniques.  In
particular, much of the work assumed a population size of one, and no
crossover operator. Current approaches to analysing evolutionary
algorithms often rely on one or more of these simplifying assumptions.

This paper presents a general-purpose technique to analyse a large
class of search heuristics involving non-overlapping populations.
In our framework, each individual of the current population is
independently sampled from the same distribution over the search
space parametrised by the previous generation. A similar modelling
of the search process first appeared in \cite{bib:vose99} to
analyse Genetic Algorithms~(GAs) however as far as we know, mainly
results at the limit of infinite population were established. In
this paper, we give the following general result for finite
populations. Given some requirement on the upper tails of this
distribution over an ordered partition of the search space and a
minimum requirement on the population size, our method will
guarantee an upper bound on the expected runtime to reach the last
set of the partition.

Particularly, the partition of the search space is similar to the
well-known fitness-level technique \cite{bib:Wegener2002} to
analyse \emph{elitist} EAs, however at our general level of
describing the search process, the traditional requirement on a
\emph{fitness-based} (this will be properly defined later on)
partition is no longer required.
Applications of the fitness-level technique itself are widely
known in the literature for classical \emph{elitist} EAs
\cite{bib:Wegener2002}. One of the first examples of using this
technique in the analysis of non-elitist EAs is~\cite{Eremeev2000}
where lower and upper bounds on the expected proportions of the
population above certain fitness levels were found.

Related to our work, early research on analysing population-based
EAs often ignored recombination operators. The family tree technique
was introduced in \cite{Witt2006FamilyTree} to analyse the
\muplusoneEA. The performance of the \muplusmuEA for different settings
of the population size was conducted in \cite{bib:He2002} using Markov
chains to model the search processes, and in \cite{bib:Chen2009c}
using a similar argument to fitness-levels. The analysis of parallel
EAs in \cite{bib:Lassig2010} also made use of the fitness-levels
argument. The inefficiency of standard fitness proportionate selection
without scaling was shown in \cite{bib:Neumann2009} and in
\cite{bib:l11} using drift analysis \cite{bib:Hajek1982}. In the
recently introduced switch analysis, the progress of the EA is
analysed relative to an easier understand reference process
\cite{Yu2015}. When the method applies, bounds on the runtime of the
reference process can be translated into bounds on the original
process. In current applications of this method, the reference process
is RLS$^=$, a simple local search algorithm. It remains to be seen how
such simple search heuristics can approximate the population dynamics
of complex EAs.

Over the recent years, runtime analysis of EAs with recombination,
often referred to as Genetic Algorithms, has been subject to
increasing interest. Generalising the work in
\cite{bib:Neumann2009}, \cite{bib:oliveto14,bib:oliveto15} showed
that the Simple Genetic Algorithm~\cite{bib:vose99} is inefficient on
\onemax, even when crossover is used.  A long sequence of work has
attempted to show that enabling crossover can reduce the runtime.
It has been shown that adding crossover to the \muplusoneEA can
decrease the runtime on the \jump problem, however only for small
crossover probabilities \cite{bib:jw02,KotzingSudholtTheile2011}.
For realistic crossover probabilities, it was shown that
\muplusoneGA can decrease the runtime by an exponential factor on
instances of an FSM testing problem, however this result assumes a
deterministic crowding diversity mechanism \cite{LehreYaoXOR}. With
the same setting on the standard \onemax function, crossover was
shown to lead to a constant speedup in \cite{bib:s12}, however
this result assumed a tailored selection mechanism. 
Seeking the construction of an efficient \emph{unbiased} algorithms for
\onemax, \cite{bib:doerr15a} introduced the \onepluslambdacommalambdaGA
and showed a significant speed up with the right choices of the 
offspring population size \cite{bib:doerr15b,bib:doerr15c}. Another 
modified GA, but this time \emph{non-elitist}, was introduced in
\cite{bib:Prugel-Bennett2015}, and its efficiency was proved on
the noisy version of \onemax function.
In~\cite{bib:ms15}, a runtime result is proposed for a class of
convex search algorithms, including some non-elitist GAs with gene
pool recombination and no mutation, on the so-called 
\emph{quasi-concave fitness landscapes}.
As a corollary, 
it has been shown that the convex
search algorithm has~$\bigO{n \log n}$ expected runtime
on \leadingones.
Those results gave the impression that adjustments or modifications 
to the standard setting of GAs, here elitist, are often required to 
illustrate the advantage of crossover. 
Until recently, it has been shown that 
the standard \muplusoneGA without too low crossover probability
has a speed up of $\Omega(n/\log(n))$ on the \jump problem compared to 
mutation-only algorithms \cite{DangEtAl2016Jump}.

Significant progress in developing and understanding a formal
model of canonical GA and its generalisations was was made
in~\cite{bib:vose99} using dynamical systems. In particular it
turned out that the behaviour of the dynamical systems model is
closely related to the local optima structure of the problem in
the case of binary search spaces~\cite{bib:vw95}. Most of the
findings in~\cite{bib:vw95, bib:vose99} apply to the infinite
population case, so it is not clear how these results can be used
in runtime analysis of EA.

A relatively new paradigm in Evolutionary Computation is Estimation of
Distribution Algorithm (EDA) \cite{bib:Larranaga2002}. Unlike traditional EAs
which use explicit genetic operators such as mutation,
recombination and selection, an EDA builds a probabilistic model
for sampling new search points so that the probability of creating an optimal solution
via sampling eventually increases high. The algorithm often starts with a specific
probabilistic model, which is gradually updated through selected solutions of
intermediate samplings.
Over the recent years, many variants of EDAs have been proposed,
along with theoretical investigations on their convergence and
scalability, \eg 
\cite{bib:Gonzalez2000,
bib:Muhlenbein1997,
bib:Pelikan2002,
bib:Shapiro2005,
bib:Zhang2004b}.
However, rigorous runtime analysis results for this particular
class of algorithms on discrete domain are still sparse.
The first analysis of this kind was conducted in \cite{bib:Droste2006}
for the compact Genetic Algorithm (cGA) \cite{bib:Harik1999} on linear
functions. Further work showed that this algorithm can be resilient to
noise \cite{2015ISAAC_Noise}.

Another simple EDA is the Univariate Marginal Distribution
Algorithm (UMDA) which was proposed in \cite{bib:Muhlenbein1996},
and analysed in a series of papers \cite{bib:Chen2009a,bib:Chen2007,
bib:Chen2009b,bib:Chen2010}.
With the $n$-dimensional Hamming cube as search space, each generation of UMDA consists of
  first sampling a population of solutions based on a vector
  $(p_i)_{i\in[n]}$ of frequencies, \ie assuming independence between
  bit positions, then summarising the selected solutions as the new
  sampling vector for the next generation.
The initial result of
\cite{bib:Chen2007} was provided for {\sc LeadingOnes} and a
harder function known as {\sc TrapLeadingOnes} under the so-called
``no-random-error'' assumption and with a sufficiently large
population. The assumption was lifted due to the technique
presented in \cite{bib:Chen2010}. Nevertheless, the analysis
assumes an unrealistically large population size, leading in
overall to a too high bound on the expected runtime. Note also
that there are two versions of the algorithm based on whether or
not margins are imposed to~$p_i$, the difference between the two
in terms of time complexity for various functions are discussed in
\cite{bib:Chen2009b}. More interestingly, \cite{bib:Chen2009a}
showed that UMDA without margins beats the $(1+1)$~EA on a
particular function called {\sc SubString}. However, it is not
recommended to use UMDA without margins in practice, as the
algorithm can always end up with a premature convergence.

In this paper, we show that all non-elitist EAs with or without crossover,
and even UMDA can be cast and analysed in the same framework.
A preliminary version of the paper was communicated in \cite{bib:Corus2014}.
This followed the line of work dated back
to the introduction of a fitness-level technique to analyse \emph{non-elitist} EAs
with linear ranking selection \cite{LehreYaoTEVC2012}, later on generalised to
many selection mechanisms and \emph{unary variation operators} \cite{bib:l11},
with a refined result
in \cite{bib:dl16}.
The original fitness-level technique and its generalisation to
the level-based technique have already found a number of applications,
including analysis of EAs in uncertain environments, such as partial
information 
\cite{bib:dl16},
noisy fitness functions
\cite{DangLehre2015Noise}, and dynamic fitness functions
\cite{DangJansenLehre2015Dynamic}. 
It has also been applied to
analyse the runtime of complex algorithms,
such as 
GAs
for shortest paths
\cite{CorusLehre2015MIC}, 
EDAs
\cite{DangLehre2015}, and self-adaptive 
EAs \cite{DangLehre2016SelfAdaptationArxiv}.

The present work improves the main result of \cite{bib:Corus2014} in
many aspects. A more careful analysis of the population dynamics leads
to a much tighter expression of the runtime bound compared to
\cite{bib:Corus2014}, immediately implying improved results in the
previously mentioned applications. In particular, the leading term in
the runtime is improved by a factor of $\Omega(\delta^{-3})$, where
$\delta$ characterises how fast good individuals can populate the
population. This significantly improves the results of
\cite{DangLehre2015Noise} and \cite{bib:dl16} 
concerning noisy
optimisation, for which $\delta$ is often very small (\eg $1/n$).  We
also provide guideline how to use the theorem to analyse the runtime
of non-elitist processes. Selected examples are given for the cases of
GAs and UMDA in optimising standard pseudo-Boolean functions, a simple
combinatorial problem, and in searching for local optima of NP-hard
problems. Furthermore, we prove that the level-based theorem is close
to optimal for the class of evolutionary processes it applies to.

The paper is structured as follows. In Section~\ref{sec:main}, we
first present the general scheme of the algorithms covered by the main
result of the paper and show how the GAs fit as special cases into
this scheme. The main result of the paper is then presented along with
a set of corollaries tailored to specific cases.  Applications of the
main result to different GAs are considered in
Section~\ref{sec:app}. The section starts with runtime analysis of the
Simple Genetic Algorithm on standard functions, followed by the
results for combinatorial optimisation problems, finally, the main
theorem is again applied to analyse an Estimation of Distribution
Algorithm. Section~\ref{sec:tightness} considers the tightness of the
level-based theorem. Finally, concluding remarks are given in
Section~\ref{sec:concl}.

\section{Main result}\label{sec:main}

\subsection{Abstract algorithmic scheme}\label{sec:algo}

We consider population-based algorithms at a very abstract level in
which fitness evaluations, selection and variation operations, which
depending on the current population $P$ of size $\lambda$, are
represented by a distribution~$D(P)$ over a finite set
$\mathcal{X}$. More precisely, the current population $P$ is
a vector $(P(1),\dots,P(\lambda))$ where $P(i) \in \mathcal{X}$
for each $i \in [\lambda]$. $D$ is a mapping from
$\mathcal{X}^\lambda$ into the space of probability distributions
over $\mathcal{X}$. The next generation is obtained by sampling
each new individual independently from $D(P)$. This scheme is
summarised in Algorithm~\ref{algo:Algorithm1}. Here and below, for
any positive integer~$n$, we define~$[n] := \{1, 2,...,n\}$.

\begin{algorithm}  \begin{algorithmic}[1]
    \REQUIRE ~\\
        Finite state space $\mathcal{X}$, and population size $\lambda\in\mathbb{N}$,\\
        Mapping $D$ from~$\mathcal{X}^{\lambda}$ to the space of prob. dist. over $\mathcal{X}$.\\
        Initial population $P_0\in \mathcal{X}^\lambda$.\\
    \FOR{$t=0,1,2,\dots$ until termination condition met}
    \STATE Sample $P_{t+1}(i) \sim D(P_t)$ independently \\ for each $i\in[\lambda]$
    \ENDFOR
  \end{algorithmic}
  \caption{Population-based algorithm.}   \label{algo:Algorithm1}
\end{algorithm}

A scheme similar to Algorithm~\ref{algo:Algorithm1} was studied
in~\cite{bib:vose99}, where it was called \emph{Random Heuristic
Search} with \emph{an admissible transition rule}. Some examples
of such algorithms are Simulated Annealing (more generally any
algorithm with the population composed of a single individual),
Stochastic Beam Search \cite{bib:vose99}, Estimation of
Distribution Algorithms such as the Univariate Marginal
Distribution Algorithm \cite{bib:Chen2009a} and the Genetic
Algorithm~\cite{bib:Gold89}. The previous studies of the framework
were often limited to some restricted settings
\cite{bib:oliveto13} or mainly focused on infinite populations
\cite{bib:vose99}. In this paper, we are interested in finite
populations and develop a general method to deduce the expected
runtime of the search processes defined in terms of \emph{number
of produced search points}. This can be translated to the
\emph{number of evaluations} once a specific algorithm is
instantiated and the optimisation scenario is specified (\eg see
\cite{bib:dl16}).

We illustrate the general scheme of
Algorithm~\ref{algo:Algorithm1} on the example of GA, which is
Algorithm~\ref{algo:GA}.
\begin{algorithm}  \begin{algorithmic}[1]
    \REQUIRE ~\\
        Finite state space $\mathcal{X}$,\\
        Operators: $\sel$, $\xor$ and $\mut$,\\
        Population size $\lambda\in\mathbb{N}$
          and recombination rate $p_{\mathrm{c}} \in [0,1]$.\\
    \STATE $P_0\sim \unif(\mathcal{X}^\lambda)$
    \FOR{$t=0,1,2,\dots$ until termination condition met}
        \FOR{$i=1$ to $\lambda$} \label{algo:GA:inner-loop}
        \STATE $u := P_t(\sel(P_t))$, $v := P_t(\sel(P_t))$.\label{algo:GA:start-iter}
        \STATE $x := \begin{cases}
                 \xor(u,v)        & \text{ with prob. } p_\mathrm{c},\\
                                  u & \text{ with prob. } 1 - p_\mathrm{c}.                  \end{cases}$\label{algo:GA:crossover}
        \STATE $P_{t+1}(i) := \mut(x)$.\label{algo:GA:end-iter}
        \ENDFOR
    \ENDFOR
  \end{algorithmic}
  \caption{Genetic algorithm.}
  \label{algo:GA}
\end{algorithm}
The term Genetic Algorithm is often
applied to EAs that use recombination operators with some a priori
chosen probability~$p_\mathrm{c} > 0$.
Here the standard operators of GA are formally represented by
transition matrices:
\begin{itemize}
  \item $\psel:[\lambda]\times\mathcal{X}^{\lambda}\to [0, 1]$
represents selection operator $\sel\colon
\mathcal{X}^\lambda \rightarrow [\lambda]$ which is randomised,
where $\psel(i|P_t)$ is the probability of selecting the~$i$-th
individual from population~$P_t$. This probability can depend on
the search point~$P_t(i)$, its relationship to the other search
points in $P_t$, and their mappings to fitness values by a
function $f\colon \mathcal{X} \rightarrow \Real$ that the
algorithm aims to optimise. Throughout the paper, we assume w.l.o.g.
the maximisation of $f$.
 \item $\pmut :
{\mathcal X} \times {\mathcal X} \to [0, 1]$, where~$\pmut(y|x)$
is the probability of mutating $x\in {\mathcal X}$ into~$y \in
{\mathcal X}$ by a randomised mutation operator~$\mut\colon
\mathcal{X} \rightarrow \mathcal{X}$.
 \item $\pxor \colon {\mathcal X}\times {\mathcal X}^2
\to [0, 1]$, where~$\pxor(x|u,v)$ is the probability of
obtaining~$x$ as a result of randomised crossover operator (or
recombination) between~$u,v \in {\mathcal X}$. In what follows,
crossover is denoted by~$\xor\colon \mathcal{X}\times\mathcal{X}
\rightarrow \mathcal{X}.$
\end{itemize}
Clearly, conditioned on the current population $P_t$, each
individual $P_{t+1}(i)$ of the next generation is independently
sampled from the same distribution which is parametrised by $P_t$. Thus,
lines~\ref{algo:GA:start-iter}-\ref{algo:GA:end-iter} of
Algorithm~\ref{algo:GA} can be summarised as $P_{t+1}(i) \sim
D(P_t)$ for some $D$ induced by the genetic operators $\sel$,
$\xor$ and $\mut$. The algorithm fits perfectly in the scheme
of Algorithm~\ref{algo:Algorithm1}.

\subsection{Level-based theorem}\label{sec:result}

This section states the main result of the paper, a general technique
for obtaining upper bounds on the expected runtime of any process that
can be described in the form of Algorithm~\ref{algo:Algorithm1}. We
use the following notation. The natural logarithm is denoted
by~$\ln(\cdot)$.
Suppose that for some~$m$ there is an ordered
partition of~$\mathcal{X}$ into subsets $(A_1,\dots,A_{m})$
called {\em levels},
we define $A_{\ge j}:=\cup_{i=j}^{m} A_i$, \ie the union of all
levels above level $j$.
An example of a partition is the \emph{canonical} partition,
where each level regroups solutions having the same fitness value
(see e.g.~\cite{bib:l11}). This partition is classified as
\emph{fitness-based} or $f$-based, if  $f(x)<f(y)$ for all $x \in
A_{j}$, $y \in A_{j+1}$ and all $j \in [m-1]$. As a result of the
algorithmic abstraction, our main theorem is not limited to this
particular type of partition.
Let~$P\in\mathcal{X}^\lambda$ be a population vector of a finite
number $\lambda\in\mathbb{N}$ of individuals.
Given any subset $A \subseteq \mathcal{X}$, we
write~$|P \cap A| := |\{i \mid P(i) \in A\}|$ to denote the number
of individuals in population $P$ that belong to the subset $A$.

\begin{theorem}\label{thm:general-fitness-levels}
Given a partition
    $(A_1,\dots,A_{m})$ of $\mathcal{X}$, define $T :=
    \min\{t\lambda \mid |P_t\cap A_{m}|>0\}$ to be the first point
    in time that elements of $A_{m}$ appear in $P_t$ of Algorithm
    \ref{algo:Algorithm1}.
    If there
    exist $z_1,\dots,z_{m-1},\delta \in(0,1]$,
    and $\gamma_0 \in (0,1)$
    such that
    for any population $P\in \mathcal{X}^\lambda$,
  \begin{description}[noitemsep,leftmargin=3em]
  \item[(G1)]
  for each level $j \in [m-1]$, 
  if
      $|P\cap A_{\geq j}  | \geq \gamma_0\lambda$
            then
\[
    \displaystyle \Pr_{y\sim D(P)}\left( y\in A_{\ge j+1}\right)
      \geq z_j,
\]
  \item[(G2)]
  for each level $j \in [m-2]$, and all $\gamma\in(0,\gamma_0]$
  if
      $|P\cap A_{\ge j}  |  \geq \gamma_0\lambda$ and
      $|P\cap A_{\ge j+1}|  \geq   \gamma\lambda$ then
\[
    \Pr_{y\sim D(P)}\left( y\in A_{\ge j+1}\right)
      \geq (1+\delta)\gamma,
\]
  \item[(G3)] and the population size $\lambda\in\mathbb{N}$ satisfies\[
\lambda \geq \left(\frac{4}{\gamma_0\delta^2}\right)
\ln\left(\frac{128m}{z_*\delta^2}\right) \text{ where }
z_*:=\min_{j\in[m-1]} \{z_j\},
\]
  \end{description}
  then
            \begin{align*}
    \expect{T}
     & \leq \left(\frac{8}{\delta^{2}}\right)
            \sum_{j=1}^{m-1}
            \left(\lambda\ln\left(\frac{6\delta\lambda}{4+z_j\delta\lambda }\right)+\frac{1}{z_j}\right).
\end{align*}
\end{theorem}

Informally, the two first conditions require a relationship
between the current population $P$ and the distribution $D(P)$ of
the individuals in the next generation:
Condition (G1) demands that the probability of creating an individual at level
$j+1$ or higher is at least $z_j$ when some fixed portion $\gamma_0$
of the population has reached level $j$ or higher.
Furthermore, if the number of individuals at level $j+1$ or higher
is at least $\gamma\lambda>0$, condition (G2) requires that their
number tends to increase further, e.g. by a multiplicative factor of
$1+\delta$.
Finally, (G3) requires a sufficiently large population size. When all conditions
are satisfied, an upper bound on the expected time for the algorithm to create
an individual in $A_{m}$ can be guaranteed.

We suggest to follow the five steps below when applying the level-based
theorem.
\begin{enumerate}
\item Identify a partitioning of the search space which reflects
  the ``typical'' progress of the population towards the target set
  $A_{m}$.
\item Find parameter settings of the algorithm and corresponding
  parameters $\gamma_0$ and $\delta$ of the theorem, such that condition (G2) can be
  satisfied. It may be necessary to adjust the partitioning of the
  search space.
\item For each level $j\in [m-1],$ estimate lower bounds $z_j$ such that
  condition (G1) holds.
\item Determine the lower bound on the population size $\lambda$ in
  (G3) using the parameters obtained in the previous steps.
\item Once all conditions are satisfied, compute the bound on the
  expected time from the conclusion of the theorem.
  A simple way (not necessarily the only way) to evaluate the sum
  $\sum_{j=1}^{m-1}\ln\left( \frac{6\delta\lambda}{4+z_j\delta\lambda}
  \right)$ is to underestimate the denominator $4 + z_j\delta\lambda$ in
  each term by either $4$, or $z_j\delta\lambda$. This gives the bounds
  $m\ln(3\lambda/2)$, or $\sum_{j=1}^{m-1}\ln(6/z_j)$ for this sum.
\end{enumerate}
Some iterations of the above steps may be required to find parameter
settings that yield the best possible bound.

We now illustrate this methodology on a simple example.

\begin{corollary}\label{cor:toy}
  For any $\lambda\geq 72(\ln(n)+9)$, the expected
    number of points created
  until the population of Algorithm~\ref{algo:toy} contains
  the   point $n$ is $\bigO{n\lambda}$.
\end{corollary}

\begin{algorithm}[H]
\begin{algorithmic}[1]
  \REQUIRE Finite state space $\mathcal{X}=\{1,\ldots, n\}$ for some $n\in\mathbb{N}$.
  \STATE $P_0\sim \unif(\mathcal{X}^\lambda)$, i.e. initial population
  sampled u.a.r.
  \FOR{$t=0, 1, 2,\ldots $ until termination condition met}
  \FOR{$i=1$ to $\lambda$}
  \STATE Sort the current population $P_t=(x_1,\ldots, x_\lambda)$\\
         such that $x_1\geq x_2\geq \cdots \geq x_\lambda$. \label{algo:toy:line:opDstart}
  \STATE $z:=x_k$ where $k\sim \unif(\{1,\ldots, \lambda/2\})$. \label{algo:toy:line:select}
  \STATE $y:=z + \unif(\{-c,0,1\})$ for any $c \in [n]$ \label{algo:toy:line:opDend}
  \STATE $P_{t+1}(i) := \max\{1,\min\{y,n\}\}$.
  \ENDFOR
  \ENDFOR
\end{algorithmic}
\caption{Example algorithm to illustrate Theorem~\ref{thm:general-fitness-levels}.}
\label{algo:toy}
\end{algorithm}

The purpose of Algorithm~\ref{algo:toy} is to illustrate the
application of Theorem~\ref{thm:general-fitness-levels} on a very
simple example. The search space $\mathcal{X}$ is the set of natural
numbers between 1 and $n$. A population is a vector of $\lambda$ such
numbers, and the implicit objective is to obtain a population
containing the number $n$. Following the scheme of
Algorithm~\ref{algo:Algorithm1}, the operator $D$ corresponds to
lines~\ref{algo:toy:line:opDstart}-\ref{algo:toy:line:opDend}.
The new individual $y$ is obtained by first selecting uniformly
at random one of the best $\lambda/2$ individuals in the population
(lines~\ref{algo:toy:line:opDstart} and \ref{algo:toy:line:select}),
and ``mutating'' this individual by adding $1$, subtracting $c$, or
do nothing, with equal probabilities.
The value of $c$ does not matter in our analysis. Note that for $c$ being fixed to $1$, one could ignore the selection
steps and easily come up with a rough bound $\bigO{n^2\lambda}$. However, for other choices of $c$, \eg equal to $n$ or randomly picked,
without the tool proposed in Theorem~\ref{thm:general-fitness-levels}
it is much less obvious how such a process should be approached and
analysed.

We now carry out the steps described previously.

Step 1: It seems natural to
partition the search space into $m=n$ levels, where $A_j:=\{j\}$ for
all $j\in[m]$.

Step 2: Assume that the \emph{current level} is $j<n-1$. This
means that in $P_t$, there are $\gamma_0\lambda$ individuals in
$A_{\geq j}$, i.e. with fitness at least $j$, and at least
$\gamma\lambda$ but less than $\gamma_0\lambda$ individuals in
$A_{\geq j+1}$, i.e. with with fitness at least $j+1$. We need to
estimate $\Pr_{y\sim D(P_t)}(y\in A_{\geq j+1})$, i.e., the
probability of producing an individual with fitness at least
$j+1$. To this end, we say that a selection event is ``good'' if
in step 5, the algorithm selects an individual in $A_{\geq j+1}$,
i.e. with fitness at least $j+1$. If $\gamma\leq 1/2$, then the
probability of a good selection event is at least
$\gamma\lambda/(\lambda/2)=2\gamma$. And we say that a mutation
event is ``good'' if in line~\ref{algo:toy:line:select}, the
algorithm does not subtract $1$ from the selected search point.
The probability of a good mutation event is $2/3$. Selection and
mutation are independent events, hence we have shown for all
$\gamma\in(0,1/2]$ that
\begin{align*}
  \Pr_{y\sim D(P_t)}(y\in A_{\geq j+1}) \geq (2\gamma)(2/3) = \gamma\left(1+\frac{1}{3}\right).
\end{align*}
Condition (G2) is therefore satisfied with $\delta=1/3$ if we choose
any positive constant $\gamma_0 \leq 1/2$.

Step 3: Assume that population $P_t$ has at least
$\gamma_0\lambda$ individuals in $A_{\geq j}$. In this case, the
algorithm produces an individual in $A_{\geq j+1}$ if in
line~\ref{algo:toy:line:select} it selects an individual in
$A_{\geq j}$ and mutates the individual by adding 1 in
line~\ref{algo:toy:line:opDend}. If we now fix $\gamma_0 = 1/2$,
the probability of selecting an individual in $A_{\geq j}$ is $1$
by the assumption. Furthermore, the probability of adding $1$ to
the selected individual is exactly $1/3$. Hence, we have shown
\begin{align*}
  \Pr_{y\sim D(P_t)}(y\in A_{\geq j+1}) \geq 1(1/3),
\end{align*}
and we can satisfy condition (G1) by defining $z_j:=1/3$ for all $j\in[m-1]$.

Step 4: For the parameters we have chosen, it is easy to see by
numerical calculation that the population size
$\lambda\geq 72(\ln(n)+9)$ satisfies condition (G3).

Step 5: We use $\sum_{j=1}^{m-1}\ln\left(\frac{6}{z_j}
\right)$ instead of $\sum_{j=1}^{m-1}\ln\left(\frac{6\delta\lambda}{4
+ \delta \lambda z_j} \right)$, thus the expected time until the
population has found the point $n$ is no more than
$$\bigO{
    \sum_{j=1}^{n-1}\frac{1}{3}
    + \lambda\sum_{j=1}^{n-1}\ln\left(\frac{6}{1/3}\right)}
  = \bigO{n\lambda}.
$$

\subsection{Proof of the level-based theorem}

Theorem~\ref{thm:general-fitness-levels} will be proved using drift
analysis, which is a standard tool in theory of randomised search
heuristics. Our distance function takes into account both the
``current level'' of the population, as well as the distribution of
the population around the current level.
In particular, let the current level $Y_t$ be the highest level
$j\in[m]$ such that there are at least $\gamma_0\lambda$
individuals at level $j$ or higher. Furthermore, for any level
$j\in[m]$, let $X^{(j)}_{t}$ be the number of individuals at level $j$ or
higher. Hence, we describe the dynamics of the population by $m+1$
stochastic processes $X^{(1)}_t,\ldots,X^{(m)}_{t},Y_t$.  Assuming
that these processes are adapted to a filtration $\filt{t}$, we write
$\expectt{X}:=\expect{X\mid\filt{t}}$ and
$\Pr_t(\mathcal{E}):=\expect{\indfe{\mathcal{E}}\mid\filt{t}}$.
Our approach is to measure the
distance of the population at time $t$ by a scalar
$g(X^{(Y_t+1)}_{t}, Y_t)$, where $g$ is a function that satisfies the
conditions in Definition~\ref{def:property}.

\begin{definition}\label{def:property}
  A function $g:\{0\}\cup[\lambda]\times [m]\rightarrow\mathbb{R}$ is called
  a \emph{\fprop} if
  \begin{enumerate}
  \item $\forall x\in\{0\}\cup[\lambda], \forall y\in[m-1]\quad
         g(x,y) \geq g(x,y+1)$,
  \item $\forall x\in\{0\}\cup[\lambda-1], \forall y\in[m]\quad
         g(x,y)\geq g(x+1,y)$, and
  \item $\forall y\in[m-1]\quad
         g(\lambda,y)\geq g(0,y+1)$.
  \end{enumerate}
\end{definition}
It is clear from the definition that the sum of two level functions
is also a level function. In addition, the three
conditions ensure that the distance $g(X^{(Y_t+1)}_{t}, Y_t)$ of
the population decreases monotonically with the current level $Y_t$.
As the following lemma shows, this monotonicity allows an upper bound
on the distance in the next generation which is partly independent
of the change in current level.
\begin{lemma}\label{lemma:increase}
  If $Y_{t+1}\geq Y_t,$ then for any \fprop $g$
  \begin{align*}
         g\left(X^{(Y_{t+1}+1)}_{t+1},Y_{t+1}\right)
    \leq g\left(X^{(Y_t+1)}_{t+1},Y_t\right).
  \end{align*}
\end{lemma}
\begin{proof}
  The statement is trivially true when $Y_t=Y_{t+1}$. On the other hand,
  if $Y_{t+1}>Y_t$, then the conditions in Definition~\ref{def:property} imply
  \begin{align*}
      g\left(X^{(Y_{t+1}+1)}_{t+1},Y_{t+1}\right)
      & \leq g\left(0,Y_{t+1}\right)
        \leq g\left(0,Y_{t}+1\right)\\
      & \leq g\left(\lambda,Y_{t}\right)
        \leq g\left(X^{(Y_t+1)}_{t+1},Y_t\right). \qedhere
  \end{align*}
\end{proof}

We can now give the formal proof of Theorem~\ref{thm:general-fitness-levels}.

\begin{proof}[Proof of Theorem \ref{thm:general-fitness-levels}]
We will prove the theorem using Lemma~\ref{lemma:pol-drift} (the
additive drift theorem) with respect to the parameter $a=0$ and a
stochastic process
\begin{align*}
  Z_t := g\left(X_{t}^{(Y_t+1)},Y_t\right),
\end{align*}
where $g$ is a level-function to be defined, and
$(Y_t)_{t\in\mathbb{N}}$ and $(X^{(j)}_t)_{t\in\mathbb{N}}$ for
$j\in[m]$ are stochastic processes to be defined.
We consider the filtration $(\mathscr{F}_t)_{t\in\mathbb{N}}$ induced by
the sequence of populations $(P_t)_{t\in\mathbb{N}}$.

We will assume w.l.o.g. that condition~(G2) is also satisfied
for $j=m-1$, for the following reason.
Given an Algorithm~1 with certain mapping~$D$, consider
Algorithm~1 with a different mapping~$D'(P)$:
If~$|P\cap A_{m}|=0$ then $D'(P)=D(P)$; otherwise $D'(P)$
assigns probability mass $1$ to some element $x$ of~$P$ that is in
$A_{m}$, e.g. to the first one among such elements.
Note that $D'$ meets conditions~(G1) and (G2). Moreover, (G2)
holds for $j=m-1$.
For the sequence of populations $P'_0,P'_1,\dots$ of Algorithm~1
with mapping~$D'$ we can put ${T' := \lambda \cdot \min\{t \mid
|P'_t\cap A_{m}| > 0\}}$. Executions of the original
algorithm and the modified one before generation $T'/\lambda$ are
identical. On generation~$T'/\lambda$ both algorithms place
elements of~$A_{m}$ into the population for the first time.
Thus, $T'$  and $T$ are equal in every realisation and their expected
values is the same.

  For any level $j\in[m]$ and time $t\geq 0$, let the random variable
  \begin{align*}
    X_t^{(j)} := | P_t\cap A_{\geq j} |
  \end{align*}
  denote the number of individuals in levels $A_{\ge j}$ at time $t$.
  Because $A_{\geq j}$ is partitioned into disjoint sets $A_{j}$ and
  $A_{\geq j+1}$, the definition implies
  \begin{align}
    |P_t \cap A_{j}| = X^{(j)}_{t} - X^{(j+1)}_{t} \label{eq:indv-at-level-j}
  \end{align}
  Algorithm~\ref{algo:Algorithm1} samples all individuals in generation
  $t+1$ independently from distribution $D(P_{t})$. Therefore, given the
  current population $P_t$,   $X^{(j)}_{t+1}$ is binomially distributed
  \begin{align*}
    X_{t+1}^{(j)} \sim\bin(\lambda, p_{t+1}^{(j)})
  \end{align*}
  where
  $
    p_{t+1}^{(j)} := \Pr_{t,y\sim D(P_{t})}\left( y\in A_{\ge j}\right)
  $
  is the probability of sampling an individual in level $j$ or higher.

  The \emph{current level} $Y_t$ of the population at time $t\geq 0$
  is defined as
  \begin{align*}
    Y_t := \max \left\{ j\in[m] \;\middle|\; X_t^{(j)} \geq  \gamma_0\lambda \right\}.
  \end{align*}
  Note that  $(X^{(j)}_t)_{t\in\mathbb{N}}$ and
  $(Y_t)_{t\in\mathbb{N}}$ are adapted to the filtration
  $(\mathscr{F}_t)_{t\in\mathbb{N}}$ because they are defined in terms
  of the population process $(P_t)_{t\in\mathbb{N}}$.

  When $Y_t <m$, there exists a unique $\gamma<\gamma_0$ such that
  \begin{align}
    X_t^{(Y_t+1)} & = |P_t\cap A_{\geq Y_t+1}| =  \gamma \lambda\label{eq:config-3}\\
    X_t^{(Y_t)}   & = |P_t\cap A_{\geq Y_t}| \geq \gamma_0\lambda\label{eq:config-2}\\
    X_t^{(Y_t-1)}  & = |P_t\cap A_{\geq Y_t-1}| \geq \gamma_0\lambda\label{eq:config-1}
  \end{align}

  In the case of $X_t^{(Y_t+1)} = 0$, it follows from
  \eqref{eq:indv-at-level-j}, \eqref{eq:config-3} and \eqref{eq:config-2} that $|P
  \cap A_j| = X^{(Y_t)}_t \geq \gamma_0 \lambda$. Condition (G1) for level $j=Y_t$
  then gives
  \begin{align}
    p_{t+1}^{(Y_t+1)} = \Pr_{y\sim D(P_t)}(y\in A_{\geq Y_t+1})
      \geq z_{Y_t} \label{eq:prob-plusindv-when-xnull}
  \end{align}
  Otherwise if $X_t^{(Y_t+1)} \geq 1$, conditions (G1) and (G2) for level
  $j=Y_t$ with \eqref{eq:config-3} and \eqref{eq:config-2} imply
  \begin{align}
    p_{t+1}^{(Y_t+1)} & = \Pr_{y\sim D(P_t)}(y\in A_{\geq Y_t+1})\\
      & \geq \max\left\{(1+\delta)\frac{X_t^{(Y_t+1)}}{\lambda}, z_j\right\} \label{eq:prob-plusindv-when-xpos}.
  \end{align}
  Condition (G2) for level $j=Y_t-1$ along with \eqref{eq:config-2} and
  \eqref{eq:config-1} also gives
  \begin{align}
    p_{t+1}^{(Y_t)} & = \Pr_{y\sim D(P_t)}(y\in A_{\geq Y_t})
                   \geq (1+\delta)\gamma_0.\label{eq:prob-curindv}
  \end{align}

  We now define the process $(Z_t)_{t\in\mathbb{N}}$ as
  $Z_t:=0$ if $Y_t=m$, and otherwise, if $Y_t<m$, we let
  $
    Z_t:=g\left(X_{t}^{(Y_t+1)},Y_t\right),
  $
  where for all $k$, and for all $1\leq j<m$,
  $g(k,j)=g_1(k,j)+g_2(k,j)$ and
  \begin{align*}
    g_1(k,j)
      &:= \ln\left(\frac{1 + (\delta/2)\lambda}{1 + (\delta/2)\max\{k,z_j\lambda/(1+\delta)\}}\right) \\
      &\quad\quad + \sum_{i=j+1}^{m-1} \ln\left(\frac{1 + (\delta/2)\lambda}{1+(\delta/2)\lambda z_i/(1+\delta)}\right)\\
    g_2(k,j)
      &:= \displaystyle \frac{1}{q_{j} }\left(1 - \frac{\delta^{2}}{7}\right)^k + \sum^{m-1}_{i=j+1} \frac{1}{q_i}
  \end{align*}
  where $q_j:=1-(1-z_j)^\lambda$.

   It follows from Lemma~\ref{lemma:level-functions} that $g(k,j)$ is a level function.
   Furthermore, $g(k,j) \geq 0$ for all $k \in \{0\}\cup[\lambda]$ and all $j \in [m]$.
     Due to properties 1 and 2 of \fprop{s} and Lemma~\ref{lemma:exp-ineq},
     the distance is always bounded from above by
     \begin{align}
       g(0,1)
       & \leq  \sum_{i=1}^{m-1}\left(\ln\left(\frac{1+(\delta/2)\lambda}{1+(\delta/2)z_i\lambda/(1+\delta)}\right) + \frac{1}{q_i}\right)\nonumber \\
       & <     \sum_{i=1}^{m-1}\left(\ln\left(\frac{4+2\delta\lambda}{4+\delta z_i\lambda}\right) +1 +\frac{1}{\lambda z_i}\right)\label{eq:distance-bound-1}
       \intertext{using that $z_i \leq 1$,}
       & <     \sum_{i=1}^{m-1}\left(\ln\left(\frac{4+2\delta\lambda}{z_i(4+\delta \lambda)}\right) +1 +\frac{1}{\lambda z_i}\right)\\ \nonumber
       & =     \sum_{i=1}^{m-1}\left(\ln\left(\frac{1}{z_i}\left(1+\frac{\delta\lambda}{4+\delta\lambda}\right)\right) +1 +\frac{1}{\lambda z_i}\right) \nonumber
       \intertext{$\ln(x)\leq x-1$ for all $x>0$ so}
       & <     \sum_{i=1}^{m-1}\left(\frac{2}{z_i} +\frac{1}{\lambda z_i}\right)\nonumber \\
       \intertext{and $z_i\geq z_*$ and $\lambda \geq \lceil 4\ln(128)\rceil= 20$ from (G3) so}
       & <  \frac{m}{z_*}\left(2+\frac{1}{\lambda}\right)           \leq  \frac{41m}{20z_*}\label{eq:distance-bound-2}.
     \end{align}
     Hence, we have $0\leq
     Z_t<g(0,1)<\infty$ for all $t\in\mathbb{N}$ which implies that
     $\expect{Z_t}<\infty$ for all $t\in\mathbb{N}$,
     and condition 2 of the drift theorem is satisfied.

  The ``drift'' of the process is the random variable
  \begin{align*}
    \Delta_{t+1}  & := g\left(X_t^{(Y_t+1)},Y_t\right)-g\left(X_{t+1}^{(Y_{t+1}+1)},Y_{t+1}\right).
  \end{align*}
  To compute the expected drift, we apply the law of total probability
  \begin{align}
  \expectt{\Delta_{t+1}}
         &  =  (1-\probt{Y_{t+1}<Y_t})\expectt{\Delta_{t+1} \mid Y_{t+1}\geq Y_t} \nonumber \\
         &\quad\; + \probt{Y_{t+1}<Y_t}\expectt{\Delta_{t+1} \mid Y_{t+1}< Y_t}.\label{eq:law-tot-prob}
  \end{align}
  The event $Y_{t+1}< Y_t$ holds if and only if
  $X_{t+1}^{(Y_t)}<\gamma_0\lambda$. Due to \eqref{eq:prob-curindv}, we obtain
  the following by a Chernoff bound
  \begin{align}
    \probt{Y_{t+1}<Y_t}
      &=    \probt{X_{t+1}^{(Y_t)}<\left(1-\frac{\delta}{1+\delta}\right)(1+\delta)\gamma_0\lambda} \nonumber\\
      &\leq
            \exp\left(-\frac{\delta^2\gamma_0\lambda}{2(1+\delta)}
            \right)
       \leq \frac{\delta^2z_*}{128m}, \label{eq:prob-fall}
  \end{align}
  where the second last inequality takes into account the population size
  required by condition (G3).
  Given the low probability of the event $Y_{t+1}<Y_t$, it suffices
  to use the pessimistic bound from (\ref{eq:distance-bound-2})
  \begin{align}
    \expectt{\Delta_{t+1} \mid Y_{t+1}<Y_t}
       &\geq -g(0,1)\geq - \frac{41m}{20z_*}. \label{eq:drift-fall}
  \end{align}

  If $Y_{t+1}\geq Y_t$, we can apply Lemma~\ref{lemma:increase}
  \begin{align*}
    &\expectt{\Delta_{t+1} \mid Y_{t+1}\geq Y_t} \\
      &\quad\geq
      \expectt{g\left(X^{(Y_t+1)}_{t},Y_t\right) - g\left(X^{(Y_t+1)}_{t+1},Y_{t}\right)
               \mid Y_{t+1}\geq Y_t}.
  \end{align*}

  Note that event $Y_{t+1}\geq Y_t$ is equivalent to having $X^{(Y_t)}_{t+1} \geq
  \gamma_0\lambda$, then due to Lemma~\ref{appendix:lemma:skip-condition-in-prob}, in
  the following we can skip the condition on the event when needed.

  If $X_{t}^{(Y_t+1)}=0$, then $X_{t}^{(Y_t+1)}\leq X_{t+1}^{(Y_t+1)}$ and
  \begin{align*}
    \expectt{g_1\left(X_t^{(Y_t+1)},Y_t\right) - g_1\left(X_{t+1}^{(Y_t+1)},Y_t\right)
             \mid Y_{t+1}\geq Y_t}\geq 0,
  \end{align*}
  because the function $g_1$ satisfies property~2 in Definition~\ref{def:property}.
  Furthermore, we have the lower bound
                \begin{multline*}
    \expectt{g_2\left(X_t^{(Y_t+1)},Y_t\right)-g_2\left(X_{t+1}^{(Y_t+1)},Y_t\right)
             \mid Y_{t+1}\geq Y_t} \\
      >    \probt{X_{t+1}^{(Y_t+1)}\geq 1}
           \left(g_2\left(0,Y_t\right)-g_2\left(1,Y_t\right)\right)
      \geq \frac{\delta^{2}}{7},
  \end{multline*}
  where the last inequality follows because of
  \eqref{eq:prob-plusindv-when-xnull} and
  $
    \probt{X_{t+1}^{(Y_t+1)}\geq 1}
      \geq 1-\left(1-p^{(Y_t+1)}_{t+1}\right)^\lambda
      \geq 1-(1-z_{Y_t})^\lambda = q_{Y_t}$, and
  $ g_2\left(0,Y_t\right)-g_2\left(1,Y_t\right)
      = (1/q_{Y_t})(\delta^2/7)$.

  In the other case, where $X_t^{(Y_t+1)}\geq 1$, we obtain
  \begin{align*}
    &\expectt{g_1\left(X_t^{(Y_t+1)},Y_t\right)-g_1\left(X_{t+1}^{(Y_t+1)},Y_t\right)
              \mid Y_{t+1} \geq Y_{t}} \\
      &\quad\geq \expectt{ \ln\left( \frac{1+\frac{\delta}{2} X^{(Y_t+1)}_{t+1}}{1+ \frac{\delta}{2}\max\left\{
      X^{(Y_t+1)}_t,\frac{z_{Y_t}\lambda}{1+\delta}\right\}}\right)}
      \geq \frac{\delta^{2}}{7}.
  \end{align*}
  where the last inequality follows from \eqref{eq:prob-plusindv-when-xpos} and
  Lemma~\ref{appendix:lemma:ln-expect-bound}.
  For function $g_2$, we get
  \begin{multline*}
    \expectt{g_2\left(X_t^{(Y_t+1)},Y_t\right)-g_2\left(X_{t+1}^{(Y_t+1)},Y_t\right)
             \mid Y_{t+1} \geq Y_{t}}
      = \\
    \frac{1}{q_{Y_t}}
    \left(
      \left(1 - \frac{\delta^2}{7}\right)^{X_t^{(Y_t)}}
              - \expectt{\left(1 - \frac{\delta^2}{7}\right)^{X_{t+1}^{(Y_t+1)}}}
      \right)>0
  \end{multline*}
  where the last inequality is due to
  Lemma~\ref{appendix:lemma:neg-moment-binomial}, applied to
  \[X_{t+1}^{(Y_t+1)}\sim\bin(\lambda,p_{t+1}^{(Y_t+1)})\] with
  $p_{t+1}^{(Y_t+1)}\geq (1+\delta)X_t^{(Y_t+1)}/\lambda$ (see
  \eqref{eq:prob-plusindv-when-xpos}) and the parameter
  \[\kappa=-\ln(1 - \delta^{2}/7)<\delta.\]

  Taking into account all cases, we have
  \begin{align}
    \expectt{\Delta_{t+1} \mid Y_{t+1}\geq Y_t }
      \geq \frac{\delta^{2}}{7}.\label{eq:drift-forward}
  \end{align}

  We now have bounds for all the quantities in~\eqref{eq:law-tot-prob}
  with \eqref{eq:prob-fall}, \eqref{eq:drift-fall} and \eqref{eq:drift-forward},
  and we get
  \begin{align*}
  \expectt{\Delta_{t+1}}
    &=       (1 - \probt{Y_{t+1}<Y_t})\expectt{\Delta_{t+1} \mid Y_{t+1}\geq Y_t} \\
    &\quad\;    + \probt{Y_{t+1}<Y_t}\expectt{\Delta_{t+1} \mid Y_{t+1}< Y_t} \\
    &\geq    \left(1-\frac{\delta^2z_*}{128m}\right)\frac{\delta^{2}}{7}
           - \left(\frac{\delta^2z_*}{128m}\right)\left(\frac{41m}{20z_*}\right)
   > \frac{\delta^2}{8}.
  \end{align*}

  We now verify condition 3 of the drift theorem
  (Lemma~\ref{lemma:pol-drift}), i.e., that $T$ has
  finite expectation. Let $p_*:=\min\{(1+\delta)/\lambda, z_*\}$, and
  note by conditions (G1) and (G2) that the current level increases by at
  least one with probability
  $\probt{Y_{t+1}>Y_t}\geq (p_*)^{\gamma_0\lambda}$. Furthermore,
  due to the definition of the modified process $D'$, if $Y_t=m$ then $Y_{t+1}=m$.
  Hence, the probability of reaching $Y_t=m$ is lower bounded by the
  probability of the event that the current level increases in all of
  at most $m$ consecutive
  generations, i.e., $\probt{Y_{t+m} =m} \geq (p_*)^{\gamma_0\lambda m}>0$.
  It follows that $\expect{T}<\infty$.

  By Lemma~\ref{lemma:pol-drift} and the upper bound on $g(0,1)$ in \eqref{eq:distance-bound-1},
  \begin{align*}
    \expect{T}
      &\leq \lambda\cdot \frac{g(0,1)}{\delta^{2}/8}\\
      &<  \left(\frac{8\lambda}{\delta^{2}}\right)
            \sum_{i=1}^{m-1}\left(\ln\left(\frac{4+2\delta\lambda}{4+z_i\delta\lambda }\right)
        +1+\frac{1}{\lambda z_i}\right)
        \intertext{then using that $4\leq \delta\lambda/5$ from (G3) and $(1/5+2)e<6$}
      &< \left(\frac{8\lambda}{\delta^{2}}\right)
             \sum_{i=1}^{m-1}\left(\ln\left(\frac{6\delta\lambda}{4+z_i\delta\lambda}\right)+
                           \frac{1}{\lambda z_i }\right).
                  \qedhere
  \end{align*}
\end{proof}

\section{Tools for Analysis of Genetic Algorithms}\label{sec:cor}

This section provides two corollaries of Theorem~\ref{thm:general-fitness-levels}
tailored to Algorithm~\ref{algo:GA}.
After that, we give sufficient conditions for tunable parameters of
many selection mechanisms allowing the applicability of the corollaries.

Since no explicit fitness function is defined in Algorithm~\ref{algo:GA},
no assumption on a $f$-based partition will be required by the corollaries.
Nevertheless, we have to generalise the \emph{cumulative selection probability}
function of $\sel$ operator, denoted $\beta(\gamma,P)$ \cite{bib:dl16} which
is defined relative to the fitness function $f$, to the one that is relative
to the order of the partition $(A_1,\dots,A_m)$.

Recall that to define $\beta(\gamma,P)$ of $\sel$ \wrt $f$ for a population $P$
of $\lambda$ search points, we first assume
$(f_{1},\dots,f_{\lambda})$ to be the vector of sorted fitness values of $P$,
\ie $f_{i} \geq f_{i+1}$ for each $i \in [\lambda-1]$.
Then given $\gamma \in (0,1]$, define
\begin{align*}
  \beta(\gamma,P)
    := \sum_{i = 1}^{\lambda} \psel(i \mid P)
         \cdot \left[ f(P(i))
                                            \geq f_{\lceil\gamma\lambda\rceil} \right],
\end{align*}
where $[\cdot]$ denotes the Iverson bracket.

Similarly, given a partition $(A_1,\dots,A_m)$, if we use $(\ell_{1},\dots,
\ell_{\lambda})$ to denote the sorted levels of search points in $P$, \ie
$\ell_{i} \geq \ell_{i+1}$ for each $i \in [\lambda-1]$, then
the \emph{cumulative selection probability} of function of $\sel$ \wrt
$(A_1,\dots,A_m)$ is \begin{align*}
  \genbeta(\gamma,P)
    := \sum_{i = 1}^{\lambda} \psel(i \mid P)
         \cdot
           \left[P(i) \in A_{\geq \ell_{\lceil\gamma\lambda\rceil}} \right].
         \end{align*}

Let $(A_1,\dots,A_m)$ be a $f$-based partition of $\mathcal{X}$, it follows
from the above definitions that
\begin{align}
  \forall P \in \mathcal{X}^\lambda,
  \forall \gamma \in (0,1]
    \quad \genbeta(\gamma,P) \geq \beta(\gamma,P)\label{eq:relation-genbeta-beta}
\end{align}
for which the equality occurs when the partition is canonical.

\subsection{Analysis of non-permanent use of crossover}

We first derive from Theorem~\ref{thm:general-fitness-levels} a corollary
that is adapted to Algorithm~\ref{algo:GA} with $p_\mathrm{c}<1$.
This setting covers the case $p_\mathrm{c}=0$, \ie only unary variation
operators are used. This specific case is the main subject of~\cite{bib:dl16},
and to some extent our corollary shares many similarities with the main theorem
of that paper. As we will see later on, stronger and more general results can
be claimed with the corollary.

\begin{corollary}\label{cor:GA}
Let $(A_1,\dots,$ $A_{m})$ be a partition of $\mathcal{X}$, define
$T:=\min\{t\lambda \mid |P \cap A_{m}| > 0\}$ to be the first
point in time that Algorithm~\ref{algo:GA} with $p_\mathrm{c}<1$
obtains an element of $A_{m}$.
If there exist
  $s_1,\dots,s_{m-1},s_*,p_0,\delta \in(0,1]$,
and a constant $\gamma_0 \in (0,1)$ such that
  \begin{description}[noitemsep]
  \item[(M1)] for each level $j\in [m-1]$
    \[\displaystyle
       \pmut\left( y\in A_{\geq j+1} \mid x \in A_j \right)\geq s_j\]
  \item[(M2)] for each level $j\in [m-1]$
   \[\displaystyle
     \pmut\left( y\in A_{\geq j} \mid x\in A_j \right)\geq p_0\]
                \item[(M3)] for any population $P \in \left(\mathcal{X}\setminus A_{m}\right)^\lambda$
              and $\gamma\in(0,\gamma_0]$
  \[\displaystyle
               \genbeta(\gamma, P) \geq \frac{(1+\delta)\gamma}{p_0(1-p_\mathrm{c})}\]
  \item[(M4)] the population size $\lambda$ satisfies
  \[\displaystyle \lambda \geq
               \left(\frac{4}{\gamma_0\delta^2}\right)\ln\left(\frac{128 m}{\gamma_0s_*\delta^2}\right)
               \text{ where } s_*:=\min_{j\in[m-1]} \{s_j\},
  \]
  \end{description}
  then
  \begin{align*}
  \expect{T}
   <
    \left(\frac{8}{\delta^{2}}\right)
    \sum_{j=1}^{m-1}\left(\lambda
    \ln\left(\frac{6\delta\lambda}{4+\gamma_0
    s_j\delta\lambda}\right)+\frac{1}{\gamma_0 s_j}\right).
  \end{align*}
\end{corollary}
\begin{proof}
Following the guideline, we apply Theorem~\ref{thm:general-fitness-levels}.
Step~1 is skipped because we already have the partition.
Step 2: We assume $|P \cap A_{\geq j}| \geq \gamma_0 \lambda$ and
$|P \cap A_{\geq j+1}|\geq \gamma\lambda > 0$ for some $\gamma
\leq \gamma_0$. Hence, to create an individual in $A_{\geq j+1}$
it suffices to pick an $x \in |P \cap A_k|$ for any $k \geq j+1$
and mutate it to an individual in $A_{\geq k}$, the probability of
such an event according to (M2) and (M3) is at least $(1 -
p_\mathrm{c}) \genbeta(\gamma,P) p_0  \geq (1+\delta)\gamma$. So
(G2) holds for the same $p_0$, $\delta$ and $\gamma_0$ as in (M3).

Step 3: We are given $|P \cap A_{j}| \geq \gamma_0 \lambda$.
Thus, with probability $\genbeta(\gamma_0,P)$, the selection
mechanism chooses an individual $x$ in either $A_j$ or $A_{\geq
j+1}$. If the individual $x$ belongs to $A_j$, then the mutation
operator will by condition (M1) upgrade the individual to $A_{\geq
j+1}$ with probability $s_j$. If the individual belongs to
$A_{\geq j+1}$, then by (M2), the mutation operator maintains the
individual in $A_{\geq j+1}$ with probability $p_0$. Finally, no
crossover occurs with probability $1-p_c$. Hence, the probability
of producing an individual in $A_{\geq j+1}$ is at least
\begin{align*}
 (1-p_\mathrm{c})\genbeta(\gamma_0,P) \min\{s_j,p_0\}
  & \geq (1-p_\mathrm{c})\genbeta(\gamma_0,P) s_j p_0\\
  & > \gamma_0 s_j =: z_j.
\end{align*}
Thus (G1) holds for that choice of $z_j$ and $z_*:=\gamma_0 s_*$.

Step 4: Given our choice of $z_*$, we have
that condition (M4) implies condition (G3).

For the last step, all conditions (G1-3) are satisfied, and
Theorem \ref{thm:general-fitness-levels} gives
\begin{align*}
  \expect{T}
  & \leq
    \frac{8\lambda}{\delta^{2}}
    \sum_{j=1}^{m-1}\left(
    \ln\left(\frac{6\delta\lambda}{4+z_j\delta\lambda}\right)+\frac{1}{z_j\lambda}\right)\\
  & =
    \frac{8}{\delta^{2}}
    \sum_{j=1}^{m-1}\left(
    \lambda\ln\left(\frac{6\delta\lambda}{4+\gamma_0
    s_j\delta\lambda}\right)+\frac{1}{\gamma_0 s_j}\right).
    \qedhere
\end{align*}
\end{proof}

From the proof, we remark that any operator can be used in place of crossover
in line~\ref{algo:GA:crossover} of Algorithm~\ref{algo:GA}, and the result
still holds. Therefore, the corollary is in fact applicable to a wider range
of algorithms than just Algorithm~\ref{algo:GA}.

\subsection{Analysis of permanent use of crossover}

The following corollary adapts Theorem~1 to the setting of Algorithm~\ref{algo:GA}
with $p_\mathrm{c} = 1$.

\begin{corollary}\label{cor:GA1}
  Given a partition $(A_1,\dots,$ $A_{m})$ of $\mathcal{X}$, let
  $T:=\min\{t\lambda \mid |P \cap A_{m}| > 0\}$ be the first point in time
  that Algorithm~\ref{algo:GA} with $p_\mathrm{c}=1$ obtains an element of
  $A_{m}$.
    If there exist
  $s_1,\dots,s_{m-1},s_*,p_0,\varepsilon,\delta \in(0,1]$,
  and a constant $\gamma_0 \in (0,1)$
such that
      \begin{description}[noitemsep]
  \item[(C1)] for each level $j \in [m-1]$
               \[\displaystyle
               \pmut\left( y\in A_{\geq j+1} \mid x \in A_{j} \right) \geq s_j\]
    \item[(C2)] for each level $j \in [m]$
              \[\displaystyle
               \pmut\left( y\in A_{\geq j} \mid x\in A_{j} \right)\geq p_0\]
    \item[(C3)] for each level $j \in [m-2]$
              \[\displaystyle \quad
               \pxor\left( x\in A_{\geq j+1} \mid u \in A_{\geq j}, v \in A_{\geq j+1}\right) \geq \varepsilon\]
    \item[(C4)] for any population $P \in \left(\mathcal{X}\setminus A_{m}\right)^\lambda$
              and $\gamma\in(0,\gamma_0]$
              \[\displaystyle
               \genbeta(\gamma, P) \geq \gamma\sqrt{\frac{1+\delta}{p_0\gamma_0\varepsilon}}\]
  \item[(C5)] the population size satisfies
              \[\displaystyle \lambda \geq \left(\frac{4}{\gamma_0\delta^2}\right)\ln\left(\frac{128 m }{\gamma_0\delta^2 s_*}\right)
              \text{ where } s_*:=\min_{j\in[m-1]} \{s_j\}\]
  \end{description}
  then
  \begin{align*}
  \expect{T}
   <
    \left(\frac{8}{\delta^{2}}\right)
    \sum_{j=1}^{m-1}\left(\lambda
    \ln\left(\frac{6\delta\lambda}{4+\gamma_0
    s_j\delta\lambda}\right)+\frac{1}{\gamma_0 s_j}\right).
  \end{align*}
\end{corollary}

\begin{proof}
We apply Theorem~\ref{thm:general-fitness-levels} following the guideline.
Again, Step~1 is skipped because the partition is already defined.

Step 2: We are given $|P \cap A_{\geq j}| \geq \gamma_0 \lambda$
and $|P \cap A_{\geq j+1}| \geq \gamma \lambda > 0$. To create an
individual in $A_{\geq j+1}$, it suffices to pick the individual
$u$ in $A_{\geq j}$ and $v$ in $A_{\geq j+1}$, then to produce an
individual in $A_{k}$ for any $k \geq j+1$ by crossover and not
destroy the produced individual by mutation. The probability of
such an event according to (C2), (C3) and (C4) is bounded from
below by $\genbeta(\gamma_0,P)\genbeta(\gamma,P)\varepsilon p_0
\geq (1 + \delta)\gamma$. Condition (G2) is then satisfied with
the same $\gamma_0$ and $\delta$ as in (C4).

Step 3: We assume $|P \cap A_{j}| \geq \gamma_0 \lambda$.
Note that condition (C3) written for level $j-1$ is $\pxor(x \in A_{\geq j}
\mid u \in A_{\geq j-1}, v \in A_{\geq j}) \geq \varepsilon$, and because
$A_{\geq j} \subset A_{\geq j-1}$ then $\pxor(x \in A_{\geq j} \mid u \in
A_{\geq j}, v \in A_{\geq j}) \geq \varepsilon$. To create an individual in
$A_{\geq j+1}$, it then suffices to pick both $u$ and $v$ from $A_{\geq j}$
in line~\ref{algo:GA:start-iter},
then to produce an individual in $A_{k}$ for any $k \geq j$ by crossover,
now if $k=j$ we need to improve the produced individual by mutation, \ie relying
on (C1), otherwise if $k>j$ it suffices not to destroy the produced individual
by mutation, \ie relying on (C2). It then follows from (C4) that the probability
of producing an individual in $A_{\geq j+1}$ is at least
\[
  \genbeta(\gamma_0,P)^2 \varepsilon\min\{s_j,p_0\}
    \geq \genbeta(\gamma_0,P)^2 \varepsilon s_j p_0
    > \gamma_0s_j =: z_j.
\]
Condition (G1) then holds for that choice of $z_j$
and $z_*:= \gamma_0 s_*$.

Step 4: It follows from the above definition of $z_*$ that (C5) implies (G3).

In the last step, since all conditions (G1-3) are satisfied,
Theorem~\ref{thm:general-fitness-levels} guarantees that
\begin{align*}
  \expect{T}
  & \leq
            \frac{8}{\delta^{2}}
    \sum_{j=1}^{m-1}\left(
    \lambda\ln\left(\frac{6\delta\lambda}{4+\gamma_0
    s_j\delta\lambda}\right)+\frac{1}{\gamma_0 s_j}\right).
    \qedhere
\end{align*}
\end{proof}

The corollary shares many similarities with Corollary~\ref{cor:GA},
except that condition (C2) has to additionally hold for level $A_m$,
that (C3) is a new condition on the $\xor$ operator, and that condition
(C4) on $\sel$ operator is different from (M3).

\subsection{Analysis of selection mechanisms}

We
show how to parameterise
the following
selection
mechanisms such that condition (M3) of Corollary~\ref{cor:GA}
and (C4) of Corollary~\ref{cor:GA1} are satisfied.
In \emph{$k$-tournament selection}, $k$ individuals are sampled
uniformly at random with replacement from the population, and the
fittest of these individuals is returned.
In $(\mu,\lambda)$-\emph{selection}, parents are sampled uniformly at
random among the fittest $\mu$ individuals in the population.
A function $\alpha:\mathbb{R}\rightarrow\mathbb{R}$ is a ranking
function \cite{bib:Gold89} if $\alpha(x)\geq 0$ for all $x\in[0,1]$,
and $\int_0^1\alpha(x) dx = 1$. In ranking selection with ranking
function $\alpha$, the probability of selecting individuals ranked
$\gamma$ or better is $\int_0^\gamma\alpha(x)dx$.
In \emph{linear ranking selection} parametrised by $\eta \in (1,2]$,
the ranking function is $\alpha(\gamma):=\eta(1 - 2\gamma) + 2\gamma$.
We define \emph{exponential ranking selection}
parametrised by $\eta>0$ with $\alpha(\gamma):=\eta e^{\eta(1 -
\gamma)}/(e^\eta - 1)$.

\begin{lemma}\label{lem:M3}
  Assuming that $(A_1,\dots,A_m)$ is a partition of $\mathcal{X}$
  with $(A_1,\dots,A_{m-1})$ being an $f$-based partition of $\mathcal{X} \setminus A_m$,
  for any constants $\delta'>0,$ $p_0 \in (0,1)$, $\varepsilon \in (0,1)$,
  and for any non-negative parameter $p_\mathrm{c} = 1 - \Omega(1)$,
  there exists a constant $\gamma_0\in (0,1)$
  such that all the following selection mechanisms
  \begin{enumerate}
    \item $k$-tournament selection,
    \item $(\mu,\lambda)$-selection,
    \item linear ranking selection,
    \item exponential ranking selection
  \end{enumerate}
  with their parameters $k$, $\lambda/\mu$ and $\eta$ being set
  to no less than $\displaystyle \frac{1+\delta'}{(1-p_\mathrm{c})p_0}$
  satisfy~(M3),
  \ie $\displaystyle
       \genbeta(\gamma,P)
       \geq \frac{(1+\delta'')\gamma}{p_0(1-p_\mathrm{c})}$
  for any $\gamma \in (0,\gamma_0]$, any ${P\in (\mathcal{X}\backslash
  A_{m})^\lambda}$ and some constant $\delta''>0$.
\end{lemma}
\begin{proof}
Since (M3) only concerns with the restricted subspace $\mathcal{X}\setminus
A_m$ we only need to focus on this subspace, and because the partition is
$f$-based on it, due to \eqref{eq:relation-genbeta-beta} it suffices to prove
the results for $\beta$ function instead of $\genbeta$ function.

The results for $k$-tournament, $(\mu,\lambda)$-selection and
linear ranking follow by applying Lemma~13 in \cite{bib:dl16} (with its $p_0$
being set as our $p_0(1-p_\mathrm{c})$).
For exponential ranking, we first remark the following lower bound,
\begin{align*}
  \beta(\gamma,P)
    &\geq \int_{0}^\gamma \frac{\eta e^{\eta(1 - x)}dx}{e^\eta - 1}
    = \left(\frac{e^\eta}{e^{\eta}-1}\right)\left(1 - \frac{1}{e^{\eta\gamma}}\right)\\
    &\geq 1 - \frac{1}{1 + \eta\gamma}.
\end{align*}
Then the rest of the proof is similar to $k$-tournament with
$\eta$ in place of $k$.
\end{proof}

\begin{lemma}\label{lem:C4}
  Assuming that $(A_1,\dots,A_m)$ is a partition of $\mathcal{X}$
  with $(A_1,\dots,A_{m-1})$ being an $f$-based partition of $\mathcal{X} \setminus A_m$,
  for any constants $\delta'>0,$ $p_0 \in (0,1)$ and
  $\varepsilon \in (0,1)$, there exists a constant $\gamma_0\in(0,1)$
  such that the following selection mechanisms
  \begin{enumerate}
    \item $k$-tournament selection with $k \geq 4(1+\delta')/(\varepsilon p_0)$,
    \item  $(\mu,\lambda)$-selection with $\lambda/\mu \geq (1+\delta')/(\varepsilon p_0)$, and
    \item exponential ranking selection with $\eta \geq 4(1+\delta')/(\varepsilon p_0)$
  \end{enumerate}
  satisfy~(C4),
  \ie $\displaystyle
       \genbeta(\gamma,P)
       \geq \gamma\sqrt{\frac{1+\delta'}{p_0 \varepsilon \gamma_0}}$
  for any $\gamma \in (0,\gamma_0]$ and any ${P\in (\mathcal{X}\backslash
  A_{m})^\lambda}$.
\end{lemma}
\begin{proof}
Similar to the proof of Lemma~\ref{lem:M3}, we only focus on the
subspace $\mathcal{X} \setminus A_m$ where the partition is
$f$-based, and based on \eqref{eq:relation-genbeta-beta} we consider
$\beta$ function instead of $\genbeta$ function.

Define $\varepsilon':=\varepsilon p_0$.

1. Consider $k$-tournament selection and let $\gamma \in
(0,\gamma_0]$.
By the definition of $f$-based partition,
to select an individual from the same
level as the $\gamma$-ranked individual or higher
it is sufficient that the
randomly sampled tournament contains at least one individual with
rank~$\gamma$ or higher. Hence,
\[
\beta(\gamma,P) \ge 1 - (1 - \gamma)^k > 1 - \frac{1}{1+\gamma k},
\]
because $(1 -\gamma)^k < e^{-\gamma k} < \frac{1}{1+\gamma k}$.
So for $k \geq 4(1+\delta')/\varepsilon'$,
\begin{align*}
  \beta(\gamma,P)
    &     \geq 1 - \frac{1}{1+\frac{4\gamma(1+\delta')}{\varepsilon'}}
     = \frac{\frac{4\gamma(1+\delta')}{\varepsilon'}}{\frac{1+4\gamma(1+\delta')}{\varepsilon'}}.
\end{align*}
If $\gamma_0 := \varepsilon'/(4(1+\delta'))$, then for all $\gamma
\in (0, \gamma_0]$ it holds that $4(1+\delta')/\varepsilon' \leq
1/\gamma$ and
\begin{align*}
  \beta(\gamma,P)
    &\geq \frac{\gamma 4(1+\delta')/\varepsilon'}{\gamma (1/\gamma) + 1}
     = \frac{2(1+\delta')\gamma}{\varepsilon'}\\
         &= \sqrt{\frac{(1+\delta')}{\varepsilon' (\varepsilon'/4(1+\delta'))}}
     \gamma
     = \sqrt{\frac{(1+\delta')}{\varepsilon' \gamma_0}} \gamma .
\end{align*}

2. In $(\mu,\lambda)$-selection,
again by $f$-based property of the partition,
we have
$\beta(\gamma,P) = \lambda\gamma/\mu$ if $\gamma \lambda \le \mu$,
and $\beta(\gamma,P) = 1$ otherwise. It suffices to pick $\gamma_0
:= \mu/\lambda$ so that with $\lambda/\mu \geq
(1+\delta')/\varepsilon'$, for all $\gamma \in (0, \gamma_0]$. Then
\begin{align*}
  \beta(\gamma,P)
    & \geq \frac{\lambda \gamma}{\mu}
    = \sqrt\frac{\lambda^2}{\mu^2}\gamma
    = \sqrt\frac{\lambda}{\mu\gamma_0}\gamma
    \geq \sqrt\frac{1+\delta'}{\varepsilon'\gamma_0}\gamma.\end{align*}

3. Similar to the proof of Lemma~\ref{lem:M3}, we remark that
$
  \beta(\gamma,P)
            \geq 1 - \frac{1}{1 + \eta\gamma}
$,
thus the rest of the proof is similar to $k$-tournament selection.
\end{proof}

\section{Applications to Genetic Algorithms for Different Problems}\label{sec:app}

This section applies the results from the previous section to
derive bounds on the expected runtime of GAs for optimising
pseudo-Boolean functions and solving combinatorial Optimization
problems.

In what follows, by bitwise mutation operator we mean an operator
that given a bitstring~$x$, computes a bitstring~${y}$, where
independently of other bits, each bit~$y_i$ is set to~$1-x_i$ with
probability~$p_{\rm m}$ and with probability~$1-p_{\rm m}$ it is
set equal to~$x_i$. The tunable parameter~$p_{\rm m}$ is called a
{\em mutation rate}.

\subsection{Optimisation of pseudo-Boolean functions}

In this subsection, we consider the expected runtime of non-elitist
GAs in Algorithm~\ref{algo:GA} on the following functions,
\begin{align*}
  &\onemax(x)
    := \sum_{i=1}^{n} x_i = |x|_1 = \om(x), \\
  &\leadingones(x)
    := \sum_{i=1}^{n} \prod_{j=1}^{i} x_i = \lo(x), \\
  &\jump_r(x)
    := \begin{cases}
         n + 1     & \text{if } |x|_1=n\\
         r + |x|_1 & \text{if } |x|_1\leq n-r\\
         n - |x|_1 & \text{otherwise}
        \end{cases},\\
  &\rr_r(x)
    := \sum_{i=0}^{n/r-1} \prod_{j=1}^{r} x_{ir+j},\\
  &\linear(x)
    := \sum_{j=1}^n c_i x_i.
\end{align*}

Note that our results on these functions also hold for their
generalised classes,
\ie the meaning of $0$-bit and $1$-bit in each position can be
exchanged, and/or $x$ is rearranged according to a fixed permutation
before each evaluation. For $\linear$, \wolg we can assume
$c_1\geq c_2\geq\dots\geq c_n >0$\cite{bib:dl16}. We also consider the class
of $\ell$-$\unimodal$ functions, for which each function has exactly
$\ell$ distinctive fitness values $f_1<f_2<\dots<f_\ell$, and each
bitstring $x$ of the search space is either optimal or it has a
Hamming-neighbour $y$ with a better fitness, \ie $f(y)>f(x)$.

For a moderate use of crossover, \ie $p_\mathrm{c}=1-\Omega(1)$,
Corollary~\ref{cor:GA} is applicable and provides upper bounds
on the expected runtime for all these functions and classes.

\begin{theorem}\label{thm:GA-on-pseudo-boolean-func}
The expected runtime of the GA in Algorithm~\ref{algo:GA},
with $p_\mathrm{c} = 1 - \Omega(1)$
using any crossover operator,
a bitwise mutation with mutation rate $\chi/n$ for any fixed constant $\chi>0$
and one of the selection mechanisms: $k$-tournament selection, $(\mu,\lambda)$-selection,
linear or exponential ranking selection, with their parameters $k$, $\lambda/\mu$
and $\eta$ being set to no less than $(1+\delta)e^{\chi}/(1-p_c)$ where $\delta>0$
being any constant, is
\begin{itemize}
  \item $\bigO{n\lambda}$ on $\onemax$ if $\lambda\geq c\ln{n}$,
  \item $\bigO{n^2 + n\lambda\ln{\lambda}}$ on $\leadingones$ if $\lambda\geq c\ln{n}$,
  \item $\bigO{n\ell + \ell\lambda\ln{\lambda}}$ on $\ell$-$\unimodal$ if $\lambda\geq c\ln(\ell n)$,
  \item $\bigO{n^2 + n\lambda\ln{\lambda}}$ on $\linear$ if $\lambda\geq c\ln{n}$,
  \item $\bigO{\left(\frac{n}{\chi}\right)^r + n \lambda  + \lambda \ln{\lambda} }$ on $\jump_r$
  if $\lambda\geq cr\ln{n}$,
  \item $\bigO{\left(\frac{n}{\chi}\right)^r\ln\left(\frac{n}{r}\right) + \frac{n\lambda\ln{\lambda}}{r}}$ on $\rr_{r\geq 2}$
  if $\lambda\geq cr\ln{n}$,
\end{itemize}
for some sufficiently large constant $c$.
\end{theorem}
\begin{proof}
We apply Corollary~\ref{cor:GA} with the canonical partition $A_{j}:=\{j
\mid f(x) = j\}$ for all functions\footnote{The first level can be $A_0$
instead of $A_1$ for some functions but that does not matter as far as we
compute the sums correctly later on.}, except for $\linear$, the
fitness-based partition \cite{bib:dl16}: \[A_{j} := \left\{ x \mid
\sum_{i=1}^j c_i \leq f(x) < \sum_{i=1}^{j+1} c_i \right\}\] for $j \in
\{0\} \cup [n-1]$ and $A_n:=\{1^n\}$, is used.

The choices of $s_j$ and $s_*$ to satisfy (M1) are the following.
\begin{itemize}
  \item For $\onemax$, we set \[s_j := {n-j \choose 1}\left(\frac{\chi}{n}\right)\left(1
- \frac{\chi}{n}\right)^{n-1} = \Omega\left(\frac{n-j}{n}\right),\] \ie the
probability of flipping a $0$-bit while keeping all the other bits unchanged,
and $s_*:= \Omega\left(\frac{1}{n}\right)$.
  \item For $\leadingones$, $\ell$-$\unimodal$ and $\linear$, we set \[s_j :=
\frac{\chi}{n}\left(1 - \frac{\chi}{n}\right)^{n-1} =
\Omega\left(\frac{1}{n}\right) =: s_*,\] \ie the probability of flipping a
specific bit to create a Hamming neighbour solution with better fitness while
keeping all the other bits unchanged. In $\ell$-$\unimodal$, the bit to flip
must exist by the definition of the function. In $\leadingones$, the $0$-bit
at position $j+1$ should be flipped. For $\linear$, the partition satisfies
that among the first $j+1$ bits there must be at least a $0$-bit, thus it
suffices to flip the left most $0$-bit will produce a search point at a
higher level.
  \item For $\jump_r$, as similar to $\onemax$ for $j \in [n-1]$  we use
\[s_j :=
{n- j + 1\choose 1}\left(\frac{\chi}{n}\right)\left(1 - \frac{\chi}{n}\right)^{n-1}
= \Omega\left(\frac{n-j + 1}{n}\right),\]  but $s_{n} := \left(\frac{\chi}{n}\right)^r
\left(1-\frac{\chi}{n}\right)^{n-r} = \Omega\left(\left(\frac{\chi}{n}\right)^r\right)$,
\ie the probability of flipping the $r$ remained $0$-bits, so $s_* :=
\Omega\left(\left(\frac{1}{n}\right)^r\right)$.
  \item For $\rr_r$, we use \[s_j := {n/r - j \choose
1}\left(\frac{\chi}{n}\right)^r\left(1-\frac{\chi}{n}\right)^{n-r} =
\Omega\left(\left(\frac{\chi}{n}\right)^r \left(\frac{n}{r} - j\right)\right),\]
\ie the probability of flipping an entire unsolved block of length $r$ (in the
worst case) while keeping the other bits unchanged, and $s_* :=
\Omega\left(\left(\frac{1}{n}\right)^r\right)$.
\end{itemize}

It follows from Lemma~\ref{lem:prob-no-flip} that the probability of not
flipping any bit position by mutation is $(1-\chi/n)^n \geq \left(1 -
\frac{\delta/2} {1+\delta/2}\right)e^{-\chi} = \frac{e^{-\chi}}{1+\delta/2}$ for
$n$ sufficiently large, thus choosing $p_0 := \frac{e^{-\chi}}{1+\delta/2}$
satisfies (M2).

We now look at (M3). In $k$-tournament selection, we have \[k \geq
\frac{(1+\delta)e^{\chi}}{1-p_\mathrm{c}} =
\left(1+\frac{\delta/2}{1+\delta/2} \right)
\frac{1}{(1-p_\mathrm{c})p_0}.\] Hence, it follows from
Lemma~\ref{lem:M3} that (M3) is satisfied with constant
$\delta':=\frac{\delta/2}{1+\delta/2}$. The same conclusion can be
drawn for the other three selection mechanisms.

In (M4), since $\gamma_0$ and $\delta'$ are constants, there should exist
a constant $c>0$ for each function such that the condition is satisfied given
the minimum requirement on population size related to $c$.

Since all conditions are satisfied, Corollary~\ref{cor:GA} gives the desired
result for each function. For $\onemax$ and $\jump_r$, optimisation time can
be saved at early levels, \ie $s_j$ is not small at the beginning, thus the
evaluation of the sum $\sum_{j=1}^{m-1} \ln\left(\frac{6\delta\lambda}{4 +
\gamma_0 s_j \delta \lambda}\right)$ has to be precise:
\begin{itemize}
  \item For $\onemax$, simplifying each term by $\ln\left( \frac{6} {\gamma_0 s_j}
  \right)$ gives
      \[\bigO{ \ln\left(\frac{6^n n^n}{ \gamma_0^n \prod_{j=0}^{n-1} (n-j)}\right)}
      = \bigO{ \ln\left(\frac{6^n n^n}{n! \gamma_0^n}\right) },\]
    and by Stirling's approximation $n! = \Theta(n^{n+\frac{1}{2}} / e^{n})$, this is
  no more than $\bigO{n}$. The expected runtime is then $\bigO{\lambda n + n\ln{n}}$.
  Since we already require $\lambda = \Omega(\ln{n})$, this can be written shortly
  as $\bigO{n\lambda}$.
    \item For $\jump_r$, we use the simplification $\ln\left( \frac{6}{\gamma_0 s_j}
  \right)$ for the first $m-2$ terms of the sum, and $\ln(3\lambda/2)$ for the last
  term, so this gives
  \begin{align*}
       \bigO{ \ln\left(\frac{6^n n^{n-1}}{\gamma_0^n \prod_{j=1}^{n-1}(n-j+1)} \right) + \ln{\lambda} } 
       &= \bigO{ \ln\left(\frac{6^n n^n }{n! \gamma_0^n } \right) + \ln{\lambda} }\\
       = \bigO{n + \ln{\lambda}}.    
  \end{align*}
    The expected runtime is then
  $\bigO{\left(\frac{n}{\chi}\right)^r + n \lambda  + \lambda \ln{\lambda}}$.
\end{itemize}

For the other functions, $s_j$ is already small at early levels, thus
there is no benefit of considering the gradual sum of $\ln$. Hence, the
simplification $\bigO{m\ln{\lambda}}$ for the sum $\sum_{j=1}^{m-1} \ln\left(
\frac{6\delta\lambda}{4 + \gamma_0 s_j \delta \lambda}\right)$ gives
the corresponding results.
\end{proof}

In the case of regular use of crossover, \ie $p_\mathrm{c} = 1$, the
relationship between the crossover operator and the structure of the search space
becomes non-negligible. In the following,
we consider a general {\em mask-based} crossover as follows.
Given two parent
genotypes~$u,v,$
the operator consists in first choosing (deterministically or
randomly) a binary string~$\tilde{m} = (m_1,\dots,m_n)$
to produce two offspring vectors~$x',x''$ as
\[
\begin{array}{ll}

x'_i= \left\{
\begin{array}{ll}
u_i, & \mbox{ if } \ m_i=1\\
v_i, &  \mbox{ otherwise,}
\end{array} \right.

&

x''_i= \left\{
\begin{array}{ll}
v_i, & \mbox{ if } \ m_i=1\\
u_i, &  \mbox{ otherwise.}
\end{array} \right.
\end{array} \
\]
Then one element of $\{x',x''\}$ chosen uniformly at random is
returned. For example, the uniform crossover is a mask-based
crossover for which $\tilde{m} \sim \unif(\{0,1\}^n)$,
and a $k$-point crossover is a mask-based crossover for which
\[\tilde{m} \sim \unif\left(\left\{0^{a_1}1^{a_2} 0^{a_3}\dots { }
\mid a_i \in \mathbb{N}, \sum_{i=1}^{k+1}a_i=n\right\}\right).\]

The following lemma shows that
all mask-based crossover operators satisfy~(C3) with
$\varepsilon={1}/{2}$ for $\om$ and $\lo$ functions.

\begin{lemma} \label{lem:constant-epsilon-OM-LO}
If $x \sim \pxor(u,v)$, where~$\pxor$ is a
mask-based crossover, then:
\begin{enumerate}
\item
If $\lo(u)=\lo(v)=j$, then
 $\Pr\left(\lo(x) \ge j \right) =1$,\\
otherwise
$\Pr\left(\lo(x) > \min\{\lo(u),\lo(v)\}\right) \geq 1/2$.
\item $  \Pr\left(\om(x) \geq
  \lceil(\om(u) +\om(v))/2\rceil\right) \geq 1/2.$
\end{enumerate}
\end{lemma}

\begin{proof}

1) When $\lo(u)=\lo(v)=j$, 
in
mask-based crossover operators, the two
bitstrings $x'$, $x''$
have~$j$ leading ones. So does the
returned bitstring,
\ie with probability~$1$.

If $\lo(u) \neq \lo (v)$,
we can assume \wolg that $\lo(v) = j$ and $\lo(u)
> \lo(v)$. Then $v$ has a~$0$ while $u$ has a~$1$ at
position $j+1$. 
So,
one of the bitstrings $x'$, $x''$ in the mask-based crossover will inherit the~$1$
at that position and the other will inherit the~$0$. This implies
that one of
them
has fitness at least~$j+1$
and with probability~$1/2$ it is
returned as output.

2) 
Each bit of $u$ and $v$ is copied either to $x'$ or to $x''$,
therefore $|x'|_1+|x''|_1 = |u|_1+|v|_1$,
which means that $\max\{|x'|_1,|x''|_1\} \ge \lceil(|u|_1 +
|v|_1)/2\rceil$.
The output is chosen with
probabilities~1/2 to be copied either from~$x'$ or $x''$,
and the result follows.
\end{proof}

\begin{theorem}\label{thm:GA1-on-OM-LO}
  Assume that the GA in Algorithm~\ref{algo:GA}
  with $p_\mathrm{c}=1$
  uses any mask-based crossover operator,
  a bitwise mutation with mutation rate $\chi/n$ for any fixed constant $\chi>0$,
  and one of the following selection mechanisms:
  \begin{itemize}
   \item $k$-tournament selection with $k\geq 8(1+\delta)e^{\chi}$,
   \item $(\mu,\lambda)$-selection with $\lambda/\mu \geq 2(1+\delta)e^{\chi}$, or
   \item exponential ranking selection with $\eta \geq 8(1+\delta)e^{\chi}$,
  \end{itemize}
  for any constant $\delta>0$. Then there exists a constant~$c>0$,
  such that the expected runtime of the GA is
  \begin{itemize}
    \item $\bigO{n\lambda}$ on $\onemax$ if $\lambda\geq c \ln n$,
    \item $\bigO{n^2 + n\lambda \ln \lambda}$ on $\leadingones$ if $\lambda\geq c \ln n$.
  \end{itemize}
\end{theorem}

\begin{proof}
We apply Corollary~\ref{cor:GA1} this time, but again using the canonical
partition of the search space for both functions.
We also assume that $n$ is large enough so that by Lemma~\ref{lem:prob-no-flip}
the probability of not flipping any bit by mutation is $(1-\chi/n)^n \geq
\left(1 - \frac{\delta/2} {1+\delta/2}\right)e^{-\chi} = \frac{e^{-\chi}}
{1+\delta/2} =: p_0$, and so (C2) is satisfied with this choice of $p_0$.
In addition, we use the same upgrades probabilities $s_j$ and their smallest
value $s_*$ for each of the two functions as in the proof of
Theorem~\ref{thm:GA-on-pseudo-boolean-func} to satisfy (C1).

It follows from Lemma~\ref{lem:constant-epsilon-OM-LO} that (C3) is
satisfied for constant $\varepsilon_1:=1/2$.
We now look at condition (C4). For $k$-tournament, we get
$ k \geq 8(1+\delta)e^{\chi}
        = 4\left(1 + \frac{\delta/2}{1+\delta/2}\right) / (p_0\varepsilon_1)
    $.
So condition (C4) is satisfied
with constant $\delta':= \frac{\delta/2}{1+\delta/2}$ for $k$-tournament
by Lemma~\ref{lem:C4}. The same reasoning can be applied so that (C4) is also
satisfied for the other selection mechanisms.

Since $\delta'$ and $\gamma_0$ are constants, thus condition~(C5)
is satisfied given $\lambda \geq c\ln{n}$ and for some
constant~$c$.
Since all conditions are satisfied, the result follows
from Corollary~\ref{cor:GA1}.
\end{proof}

Note that the upper bounds in Theorem~\ref{thm:GA1-on-OM-LO} match
the upper bounds of Theorem~\ref{thm:GA-on-pseudo-boolean-func}.
The latter is a generalisation of the results in \cite{bib:dl16}
which were limited to EAs without crossover.

In the next sections, we further demonstrate the generality of
Theorem~\ref{thm:general-fitness-levels} through Corollary~\ref{cor:GA}
by deriving bounds on the expected runtime of GAs with $p_\mathrm{c}
= 1 - \Omega(1)$ to optimise or to approximate the optimal solutions of
combinatorial optimisation problems. We start with a simple problem
of sorting $n$ elements from a totally ordered set.

\subsection{Optimisation on permutation space}
Given $n$ distinct elements from a totally ordered set, we
consider the problem of ordering them so that some \emph{measure
of sortedness} is maximised. Several measures were considered
by~\cite{Scharnow2005} in the context of analysing the~\oneplusoneEA.
One of those is $\inv(\pi)$ which is defined to be the number of
pairs $(i, j)$ such that $1 \leq i< j \leq n$, $\pi(i) < \pi(j)$
(i.e. pairs in correct order).
We show that with the method introduced in this paper,
\ie Corollary~\ref{cor:GA} analysing GAs on Sorting problem with
$\inv$ measure, denoted by $\sortinginv$, is not much harder than
analysing the~\oneplusoneEA.

For the mutation we use the $\exchg(\pi)$
operator~\cite{Scharnow2005}, which consecutively applies $N$
pairwise exchanges between uniformly selected pairs of indices,
where $N$ is a random number drawn from a Poisson distribution
with parameter $1$.

\begin{theorem} \label{thm:GA-on-sorting}
If the GA in Algorithm~\ref{algo:GA}
with $p_\mathrm{c} = 1 - \Omega(1)$
uses any crossover operator,
the $\exchg$ mutation operator,
one of the selection mechanisms:
$k$-tournament selection, $(\mu,\lambda)$-selection, and linear or exponential ranking selection, with
their parameters $k$, $\lambda/\mu$ and $\eta$ being
set to no less than $(1+\delta)e/(1-p_c)$,
then there exists a constant
$c>0$ such that if the population size is $\lambda\geq c\ln n$,
the expected time to obtain the optimum of $\sortinginv$ is
$\bigO{n^2\lambda}$.
\end{theorem}
\begin{proof}
  Define $m:= \binom{n}{2}.$ We apply
  Corollary~\ref{cor:GA} with the canonical partition,
  $A_{j} : = \{\pi \mid \inv(\pi) = j\}$ for $j=\{0\} \cup [m]$.
  The probability that mutation exchanges $0$ pairs is $1/e$.
  Hence, condition (M2) is satisfied
  for $p_0:=1/e$.

  To show that (M1) is satisfied, we first define $s_j :=
  \frac{m-j}{em}$ for each $j \in \{0\}\cup [m-1]$. Since
  $x\in A_j$, then the probability that the exchange operator exchanges
  exactly one pair is $1/e$, and the
  probability that this pair is incorrectly ordered in $x$, is
  $(m-j)/m$.
  Thus, (M1) is satisfied with the defined $s_j$.

  In (M3), for $k$-tournament we have that $k \geq \frac{(1+\delta)e}
  {1-p_\mathrm{c}} = \left(1 + \frac{\delta/2}{1+\delta/2}\right)\frac{1}
  {(1-p_\mathrm{c})p_0}$, thus the condition is satisfied for constant
  $\delta':=\frac{\delta/2}{1+\delta/2}$ and some constant $\gamma_0\in(0,1)$
  by Lemma~\ref{lem:M3}. The same conclusion can be drawn be the other
  selection mechanisms.
    Finally, since $\gamma_0, \delta'$ are constants, there exists a constant
  $c>0$ such that (M4) is satisfied for any $\lambda \geq c\ln(n)$.

  It therefore follows by Corollary~\ref{cor:GA} that the expected
  runtime of the GA on $\sortinginv$ is~$\bigO{n^2\lambda}$, \ie this
  is similar to $\onemax$ except that we have $m=O(n^2)$ levels.
\end{proof}

\subsection{Search for Local Optima}\label{sec:cop}

A great interest in the area of combinatorial optimisation is
to find approximate solutions to NP-hard problems, because exact
solutions for such problems are unlikely be computable in polynomial
time under the so-called P$\ne$NP hypothesis. In the case of
maximisation problems, a feasible solution is called a {\em
$\rho$-approximate solution} if its objective function value is at
least $\rho$ times the optimum for some $\rho\in (0,1]$. Local search
is one method among others to approximate solutions for combinatorial
optimisation problems through finding local optima (a formal definition
is given below).
For a number of well-known problems it was shown~\cite{AP95}
that any local optimum is guaranteed to be a $\rho$-approximate solution
with a constant~$\rho$.

Suppose that a {\em neighbourhood} ${\mathcal N}(x)\subseteq
{\mathcal X}$ is defined for every~$x\in {\mathcal X}$. The
mapping ${\mathcal N}: {\mathcal X} \to 2^{\mathcal X}$ is called
the {\em neighbourhood mapping} and all elements of ${\mathcal
N}(x)$ are called {\em neighbours} of~$x$.
For example, a frequently used neighbourhood mapping in the case of
binary search space~${\mathcal X}=\{0,1\}^n$ is defined by the Hamming
distance $H(\cdot,\cdot)$ and a radius~$r$ as
\[
{\mathcal N}(x)=\{y \mid H(x,y) \leq r\}.
\]
If $f(y)\leq f(x)$ holds for all neighbours~$y$ of~$x \in {\mathcal
X}$, then~$x$ is called a \emph{local optimum} \wrt~${\mathcal N}$. The
set of all local optima is denoted by~$\mathcal{LO}$ (note that
global optima also belong to~$\mathcal{LO}$).

A local search method starts from some initial solution~$y_0$.
Each iteration of the algorithm consists in moving from the
current solution to a new solution in its neighbourhood, so that
the value of the fitness function is increased. The way to choose
an improving neighbour, if there are several of them, will not
matter in this paper. The algorithm continues until a local
optimum is reached.
Let $m$ be the number of different fitness values attained by
solutions from~$\mathcal{X} \backslash \mathcal{LO}$ plus~1. Then
starting from any point, the local search method finds a local
optimum within at most~$m-1$ steps.

Alternatively, one can use GAs to solve the optimisation problem,
and possibly find local optima. The following result provides sets
of sufficient conditions for a performance guaranteed GA to find
local optima.

\begin{corollary}\label{cor:GA-as-LS1}
Given some positive constants $p_0, \varepsilon_0$ and
$\delta$, define the following conditions:
\begin{description}
\item[(X1)] $\displaystyle \pmut(y \mid x)\ge s$ for any $\ x \in {\mathcal X}, \ y \in
{\mathcal N} (x)$.
\item[(X2)] $\displaystyle \pmut(x \mid x)\geq p_0$ for all $x \in {\mathcal X}$.
\item[(X3)] $\displaystyle \pxor\big(f({x}') \ge \max\{f({u}),f({v})\} \mid u, v\big)\geq \varepsilon_0$
for any ${u},{v} \in {\mathcal X}$.
\item[(X4.1)] the non-elitist GA in Algorithm~\ref{algo:GA} is set with $p_\mathrm{c}=1$,
            and it uses one of the following selection mechanisms:
           \begin{itemize}
           \item $k$-tournament selection with ${k\ge\frac{4(1+\delta)}{\varepsilon_0 p_0}}$,
           \item $(\mu,\lambda)$-selection with $\frac{\lambda}{\mu} \ge \frac{(1+\delta)}{\varepsilon_0 p_0}$,
           \item exponential ranking selection with $\eta\ge\frac{4(1+\delta)}{\varepsilon_0 p_0}$.
           \end{itemize}
\item[(X4.2)] the non-elitist GA is set with $p_\mathrm{c}=1 - \Omega(1)$,
              and it uses one of the following selection mechanisms:
                                 $k$-tournament selection,                       $(\mu,\lambda)$-selection,                       linear or exponential ranking selection,                       with their parameters $k$, $\lambda/\mu$ and $\eta$ being set to no less than $\frac{(1+\delta)}{(1-p_\mathrm{c})p_0}$.
           \end{description}
If (X1-3) and (X4.1) hold, or exclusively (X1-2) and (X4.2)
hold, then there exists a constant~$c$, such that for~$\lambda \ge
c\ln\left(\frac{m}{s}\right)$, a local optimum is reached for the
first time after~$\bigO{m\lambda \ln \lambda + \frac{m}{s}}$
fitness evaluations in the expectation.
\end{corollary}

Condition (X4.1) or (X4.2) characterises the setting of selection
mechanisms, while (X1-3) bear the properties of the variation
operators over the neighbourhood structure $\mathcal{N}$.
Particularly, (X1) assumes
a lower bound $s$ on the probability that the mutation
operator transforms an input solution into a specific neighbour. To illustrate this condition,
we note that in most of the local search algorithms the neighbourhood
$\mathcal{N}({x})$ may be enumerated in polynomial time of the
problem input size. For such neighbourhood mappings, a mutation
operator that generates the uniform distribution
over~$\mathcal{N}({x})$ will select any given point
in~$\mathcal{N}(x)$ with probability at least~$s,$ so that $1/s$
is polynomially bounded in the problem input size.

If crossover is frequently used, \ie $p_\mathrm{c} = 1$, we also
need to satisfy condition (X3) on the the crossover operator.
It requires that the fitness of solution~$x$ on the output of
crossover is not less than the fitness of parents with probability
at least~$\varepsilon_0$.
Note that such a requirement is satisfied
with $\varepsilon_0 = 1$ for the optimized crossover operators,
where the offspring is computed as a solution to the {\em optimal
recombination problem} (see e.g~\cite{ErKov14I}). This
supplementary problem is known to be polynomially solvable for
Maximum Clique~\cite{BN98}, Set Packing, Set Partition and some
other NP-hard problems~\cite{ErKov14I}.

When a set of conditions, \ie depending on whenever $p_\mathrm{c} =
1$ or $p_\mathrm{c} = 1 - \Omega(1)$, is satisfied and given a
sufficiently large population \wrt to $m$ and $s$, an upper bound
on the expected number of fitness evaluations that GA performs to
find a local optimum is guaranteed. The proof directly follows from
Corollaries~\ref{cor:GA1} and \ref{cor:GA} of
Theorem~\ref{thm:general-fitness-levels}.

\begin{proof}[Proof of Corolary~\ref{cor:GA-as-LS1}]
We use the following partition
\begin{align*}
A_j &:= \{x\in \mathcal{X} | f(x) = f_j\} \backslash \mathcal{LO},
\ j\in [m-1], \text{ and }\\
A_{m} &:= \mathcal{LO}.
\end{align*}
We note that $(A_1,\dots,A_{m-1})$ is a fitness-based partition of
$\mathcal{X} \setminus \mathcal{LO}$. Thus, applications of
Corollary~\ref{cor:GA} and Lemma~\ref{lem:M3} for the set of
conditions (X1-2) and (X4.2), or alternatively,
Corollary~\ref{cor:GA1} and Lemma~\ref{lem:C4} for the set of
conditions (X1-3) and (X4.1), yield the required result.
\end{proof}

A similar result for GAs with very high selection pressure was
obtained in~\cite{bib:erarxiv16}. In particular, the result
from~\cite{bib:erarxiv16} implies that a GA with tournament
selection or $(\mu,\lambda)$-selection, given certain settings of
parameters, reaches a local optimum
after~$\bigO{{m\ln (m)}/{s}}$ fitness evaluations
in expectation. The upper bound from Corollary~\ref{cor:GA-as-LS1}
has an advantage to the bound from~\cite{bib:erarxiv16} if $1/s$
is at least linear in~$m$.

The effect of the corollary can be seen in the following example
setting. Let us consider the binary search
space~$\{0,1\}^n$ with Hamming neighbourhood mapping of a constant
radius~$r$, a fitness function~$f$ such that $m \in \poly(n)$,
and assume that GA uses bitwise mutation
operator and $p_c=1-\Omega(1)$. The bitwise mutation operator
outputs a string~${y}$, given a string~${x}$, with probability
$p_{\rm m}^{H({x},{y})}(1-p_{\rm m})^{n-H({x},{y})}$. Note that
probability~$p_{\rm m}^j(1-p_{\rm m})^{n-j}$, as a function
of~$p_{\rm m}$ attains its maximum at $p_{\rm m}=j/n$. It is easy
to show (see e.g.~\cite{bib:erarxiv16}) that for any ${x} \in
{\mathcal X}$ and ${y} \in \mathcal{N}({x})$, the bitwise mutation
operator with~$p_{\rm m}=r/n$ satisfies the condition $\pmut(y
\mid x)=\bigO{1/n^r}$. Besides that, for a sufficiently large~$n$
and any $x\in {\mathcal X}$ holds $\pmut(x \mid x)\geq
e^{-r}/2=\Omega(1)$. Therefore, Corollary~\ref{cor:GA-as-LS1}
implies that a GA with the above mentioned
operators
given appropriate settings of parameters
$\lambda, p_{\rm m}$ and $p_c$, first visits a local optimum
\wrt a Hamming neighbourhood of constant radius after a
polynomially bounded number of fitness evaluations in expectation.

To give concrete examples, we consider the following unconstrained
(and unweighted) problems:
\begin{itemize}
  \item \maxsat: given a CNF formula in~$n$ logical variables which
  is represented by $m'$ clauses~${\bf c}_1,\dots,{\bf c}_{m'}$ and
  each clause is a disjunction of logical variables or their negations,
  it is required to find an assignment of the variables so that the
  number of satisfied clauses is maximised.
  \item \maxcut: given an undirected graph $G=(V,E)$, it is required
  to find a partition of $V$ into two sets $(S,V\setminus S)$, so that
  the number of crossing edges, \ie $\delta(S):=|\{(u,v)
  \mid (u,v) \in E, u \in S, v \notin S\}|$, is maximised.
\end{itemize}

Both problems are NP-hard, and their solutions can be naturally
represented by bitstrings. Particularly, any local optimum \wrt
the neighbourhood defined by Hamming distance~$1$ has at least
half the optimal fitness~\cite{AP95}. Better approximation ratios
can be obtained with more sophisticated algorithms, \eg $0.79$-approximation
for \maxsat~\cite{ABZ05} based on a time-consuming semi-definite
programming relaxation. The local search algorithm with the above
neighbourhood however has an advantage of low time complexity, \eg
it only makes $\bigO{nm'}$~tentative solutions for \maxsat.
Corollary~\ref{cor:GA-as-LS1} translates such a result into
relatively low runtime bound for GAs. 

\begin{theorem}\label{thm:GA-on-approx-MAXSAT}
Suppose the GA in Algorithm~\ref{algo:GA} is applied to \maxsat or to
\maxcut using a bitwise mutation with~$p_{\rm m}=\chi/n$, where~$\chi>0$
is a constant, a crossover with $p_c=1-\Omega(1)$ and one of the selection
mechanisms: $k$-tournament selection, $(\mu,\lambda)$-selection, linear
or exponential ranking selection, with their parameters $k$, $\lambda/\mu$
and $\eta$ being set to no less than $\frac{(1+\delta)e^{\chi}}{1-p_c}$,
where $\delta>0$ is any constant.
Then there exists a constant~$c$, such that for~$\lambda \ge
c\ln(nm')$, a $1/2$-approximate solution is reached for the first
time after~$\bigO{m' \lambda \ln \lambda + nm'}$ fitness
evaluations in expectation for $\maxsat$, and after $\bigO{|E|
\lambda \ln \lambda + |V|\;|E|}$ for \maxcut.
\end{theorem}

The proof is analogous to the analysis of $\ell$-$\unimodal$
function in Theorem~\ref{thm:GA-on-pseudo-boolean-func}, combined
with Corollary~\ref{cor:GA-as-LS1} where $m$ is no more than
$m'+1$ for \maxsat and no more than $|E|+1$ for \maxcut.

\section{Estimation of Distribution Algorithms}\label{sec:umda}

As mentioned in the introduction, there are few rigorous runtime
results for UMDA and other estimation of distribution algorithms
(EDAs). The analytic techniques used in previous analyses of EDAs were
often complex, \eg relying on the machinery of Markov chain
theory. Surprisingly, even apparently simple problems, such as the
expected runtime of UMDA on \onemax, were until recently open.

Algorithm~\ref{algo:Algorithm1} matches closely the typical behaviour
of estimation of distribution algorithms: given a current distribution
over the search space, sample a finite number of search points, and
update the probability distribution. We demonstrate the ease at which
the expected runtime of UMDA with margins and truncation selection on
the \onemax function can be obtained using the level-based theorem
without making any simplifying assumptions about the optimisation
process.

\subsection{Algorithm}

If $P \in \mathcal{X}^\lambda$ is a population of $\lambda$
solutions, let $P(k,i)$ denote the value in the $i$-th bit
position of the $k$-th solution in $P$. The Univariate Marginal
Distribution Algorithm (UMDA) with $(\mu,\lambda)$-truncation
selection is defined in Algorithm \ref{algo:UMDA}.

The algorithm has three parameters, the parent population size
$\mu$, the offspring population size $\lambda$, and a parameter
$m'<\mu$ controlling the size of the margins. It is necessary to
set $m'>0$ to prevent a premature convergence, \eg
without this margin $p_t(i)$ can go to a non-optimal fixation,
this prevents further exploration and causes an infinite runtime.
Based on insights about optimal mutation rates in the (1+1) EA, we
will use the parameter setting $m'=\mu/n$ in the rest of this
section.

It is immediately clear that the UMDA in Algorithm~\ref{algo:UMDA}
is a special case of Algorithm~\ref{algo:Algorithm1} scheme. The
probability distribution $D(P_t)$ of~$y$ is computed in steps 6-7,
and is defined for any search point $x\in\{0,1\}^n$ by
\begin{align*}
  \prob{y=x} = \prod_{j=1}^n p_t(j)^{x_j}\left(1-p_t(j)\right)^{1-x_j}.
\end{align*}

\begin{algorithm}[H]
  \caption{UMDA}
  \begin{algorithmic}[1]
    \STATE Initialise the vector $p_0 := (1/2,\ldots,1/2)$.
    \FOR{$t= 1, 2, 3, \ldots$}     \FOR{$x=1$ to $\lambda$}
    \STATE \label{algo:UMDA:sampling}Sample the $x$-th individual $P_t(x,\cdot)$ according to
    \begin{align*}
      P_{t}(x,i) \sim \bernoulli(p_{t-1}(i))\text{ for all } i\in[n].
    \end{align*}
    \ENDFOR
    \STATE Sort the population $P_{t}$ according to $f$.
    \STATE Calculate a new vector $p_{t}$ from $P_{t}$ according to
      \begin{align*}
                p_{t}(i)
          & :=
          \begin{cases}
             \displaystyle \frac{m'}{\mu}      & \text{ if }     X_i < m'\\
             \displaystyle \frac{X_i}{\mu}    & \text{ if }  m' \leq X_i \leq \mu - m' \\
             \displaystyle 1 - \frac{m'}{\mu}  & \text{ if }  \mu - m' < X_i,
          \end{cases}
        \end{align*}\\
        for all $i\in[n]$, where $X_i := \sum_{k=1}^{\mu} P_{t}(k,i)$.
    \ENDFOR
  \end{algorithmic}
  \label{algo:UMDA}
\end{algorithm}

Note that the sampling from vector $(p_i)_{i\in[n]}$ in UMDA is
analogous to a population-wise crossover, \ie the bit sampling
at position $i$ with a non-marginal probability (line~\ref{algo:UMDA:sampling}
in Algorithm~\ref{algo:UMDA}) is equivalent to picking uniformly
at random a bit from the $\mu$ bits at position $i$ of the $\mu$
selected individuals of the previous generation.
In some other randomised search heuristics such as ant
colony optimisation (ACO) and compact genetic algorithms (cGA),
the sampling distribution~$D_t$ does not only depend on the
current population, but also on additional information, such as
pheromone values. The level-based theorem does not apply to such
algorithms.

It is well-known that the (1+1) EA solves \onemax problem in
expected time $\Theta(n\ln n)$, and this is optimal for the class
of unary, unbiased black-box algorithms. Surprisingly, no previous
runtime analysis of UMDA seems available for \onemax. We
demonstrate that the expected runtime can be obtained relatively
easy with our methods. To obtain lower bounds on the tail of the
level-distribution, we make use of the Feige inequality~\cite{bib:Feige2004}
(or see Lemma~\ref{lem:feige-ineq} in the appendix).

\begin{theorem}
  Given any positive constants     $\delta\in(0,1)$,
 and $\gamma_0 \leq \frac{1}{(1+\delta)13e}$,
 the
   UMDA with
    offspring population size $\lambda$ with $b\ln(n)\leq \lambda\leq n/\gamma_0$ for some
    constant $b>0$,
    parent population size $\mu=\gamma_0\lambda$
            and margins $m'=\mu/n$,
  has expected optimisation time
  $\bigO{n\lambda\ln\lambda}$ on \onemax.
\end{theorem}
\begin{proof}
Step 1:
We use the canonical partition into $m=n+1$ levels,
where level $j\in[m]$ is defined by
\begin{align*}
  A_j := \{x\in\{0,1\}^n\mid  \onemax(x)=j-1 \}.
\end{align*}
We use the parameter $\gamma_0:=\mu/\lambda$ and let $Y$ be
the number of one-bits in the sampled solution.

The choice $m' = \mu/n$ and $\mu\leq n$ implies that the margins for
$p_t(i)$ are simplified to $1/n$ and $1-1/n$, and that these margins
are only used when the bit values at position $i$ of the $\mu$
selected individuals are identical.
We categorise the probabilities
$p_t(i)$ into three groups:
those at the upper margin $1-1/n$,
those
at the lower margin $1/n$, and
intermediary values
in the closed interval $[1/\mu,1-1/\mu]$.  Due to linearity of the fitness
function, the components of $p_t$ can be rearranged without changing
the distribution of $Y$. We assume w.l.o.g. a rearrangement so that there
exists integers $k,\ell\geq 0$ satisfying
\begin{align*}
  1\leq X_i&<\mu\quad\text{and}\quad p_t(i)=X_i/\mu &\text{if } &1\leq i\leq k,\\
        X_i&=\mu\quad\text{and}\quad p_t(i)=1-1/n &\text{if } &k<i\leq k+\ell, \text{and}\\
        X_i&=0\quad\text{and}\quad p_t(i)=1/n &\text{if } &k+\ell<i\leq n.
\end{align*}
By these assumptions, it follows that
\begin{align}
  \sum_{i=k+1}^{k+\ell}X_i = \mu\ell\quad\text{ and }\quad\sum_{i=k+\ell+1}^{n}X_i = 0.\label{eq:1}
\end{align}

In the following, we define $Y_{i,k}$ to be the number of
sampled one-bits due to
$(p_t(i),\ldots,p_t(k))$ in the rearranged $p_t$.

For any population $P_t$ and any $\gamma\in[0,\gamma_0]$, let $j\in[n]$ be any integer
such that $|P_t\cap A_{\ge j} |\geq \gamma_0\lambda=\mu$
and $|P_t\cap A_{\ge j+1} |\geq\gamma\lambda$.
This implies that among the $\mu$
fittest individuals in the current population, there are at least
$\gamma\lambda$ individuals with at least~$j$ one-bits, and
the remaining among the $\mu$ fittest individuals have at least
$j-1$ one-bits. Hence, the total number of one-bits among the
fittest $\mu$ individuals must satisfy
\begin{align}
  \sum_{i=1}^n X_i & \geq \gamma\lambda j+(\mu-\gamma\lambda)(j-1) =
  \gamma\lambda+\mu (j-1).\label{eq:2}
\end{align}
Combining Eqs. (\ref{eq:1}) and (\ref{eq:2}), when $k\geq 1$, we get
\begin{align}
  \expect{Y_{1,k}} = \sum_{i=1}^k p_t(i)=\frac{1}{\mu} \sum_{i=1}^k X_i \geq
  \frac{\gamma\lambda}{\mu}+ j-1-\ell.  \label{eq:3}
\end{align}

Step 2:
We first verify condition~(G2), \ie checking if $\prob{Y \geq j}
\geq (1+\delta)\gamma$ for any level $j$ defined like above with
$\gamma>0$.
We distinguish between two cases, either $k=0$ or $k\geq 1$.

\underline{Case 1}: If $k\geq 1$, then
Eq.~(\ref{eq:3}) and Lemma~\ref{lem:feige-ineq} give
\begin{align*}
  \prob{Y_{1,k}\geq j-\ell}
   & =    \prob{Y_{1,k}> j-1-\ell+\frac{\gamma\lambda}{\mu}-\frac{\gamma\lambda}{12\mu}}\\
   & \geq \prob{Y_{1,k}> \expect{Y_{1,k}}-\frac{\gamma\lambda}{12\mu}}\\
   & \geq \min\left\{\frac{1}{13},\frac{\frac{\gamma\lambda}{12\mu}}{\frac{\gamma\lambda}{12\mu}+1}\right\}\\
   & =    \min\left\{\frac{1}{13},\frac{\gamma\lambda}{\gamma\lambda+12\mu}\right\}
   \geq \frac{\gamma\lambda}{13\mu}.
\end{align*}

The probability of sampling an individual with at least~$j$
one-bits in the next generation is therefore lower-bounded by
\begin{align*}
  \prob{Y\geq j}
  & \geq \prob{Y_{1,k}\geq j-\ell}\prob{Y_{k+1,k+\ell}=\ell}\\
  & \geq \frac{\gamma\lambda}{13\mu}\left(1-\frac{1}{n}\right)^{\ell}
  \geq \frac{\gamma\lambda}{13\mu}\left(1-\frac{1}{n}\right)^{n-1}\\
  & \geq \frac{\gamma\lambda}{13e\mu}
   \geq (1+\delta)\gamma.
\end{align*}

\underline{Case 2:} If $k=0$, then all the $\mu$ best individuals in
the population must be  identical. By assumption, there are
$\gamma\lambda\geq 1$ individuals with at least $j$ 1-bits, hence all
the $\mu$ best individuals must have at least $j$ 1-bits. In this case,
there are $\ell\geq j$ probabilities at the upper margin, and
we get
\begin{align*}
  \prob{Y\geq j} &\geq
  \prob{Y_{k+1}^{k+j}\geq j}\\
    &= \left(1-\frac{1}{n}\right)^j
    \geq \frac{1}{e}
    \geq 13\gamma_0(1+\delta)
    >    (1+\delta)\gamma,
\end{align*}
and condition (G2) is therefore satisfied also in this case.

Step 3: We now consider condition~(G1) for
any $j$ defined with $\gamma = 0$.
Again we check the two cases $k=0$, and $k\geq 1$.

\underline{Case 1:} If
$k=0$, then with our assumption, the $\ell\geq j-1$ first probabilities are
at the upper margin $1-1/n$, and the last $n-\ell\leq n-j+1$ probabilities
are at the lower margin $1/n$. In order to obtain a search point
with at least $j$ one-bits, it is sufficient to sample exactly
$\ell\geq j-1$
one-bits in the first $\ell$ positions and exactly one 1-bit in
the last $n-\ell\leq n-j+1$ positions. Hence,
\begin{align*}
  \prob{Y\geq j}
  & \geq \prob{Y_{1,\ell}\geq \ell}\prob{Y_{\ell+1,n}\geq 1}\\
  & \geq \left(1-\frac{1}{n}\right)^\ell \left(\frac{n-\ell}{n}\right)\\
  & \geq    \left(1-\frac{1}{n}\right)^{n-1}\left(\frac{n-j+1}{n}\right)
  \geq  \left(\frac{n-j+1}{en}\right).
\end{align*}

\underline{Case 2:} When $k\geq 1$, we note from Eq.~(\ref{eq:3}) that
\begin{align*}
  \expect{ Y_{2,k} } = \expect{Y_{1,k}}-p_t(1) \geq j-1-\ell-p_t(1)
\end{align*}
Again, by Lemma~\ref{lem:feige-ineq}
we get
\begin{multline*}
  \prob{  Y_{2,k}\geq j-1-\ell}\\
   =   \prob{  Y_{2,k} > j-1-\ell-p_t(1) - (1 - p_t(1))} \\
   \geq \prob{  Y_{2,k} > \expect{Y_{2,k}} - (1 - p_t(1))} \\
   \geq \min\left\{\frac{1}{13}, \frac{1-p_t(1)}{2-p_t(1)}\right\}
   >    \frac{1-p_t(1)}{13}.
\end{multline*}
The probability of sampling an individual with at least $j$
one-bits in this configuration is bounded from below as
\begin{multline*}
  \prob{Y\geq j} \\
   > \prob{Y_1=1}\prob{Y_{2,k}\geq j-1-\ell}\prob{Y_{k+1,k+\ell}=\ell}\\
   \geq \left(\frac{p_t(1)(1-p_t(1))}{13}\right)\left(1-\frac{1}{n}\right)^\ell\\
   \geq \left(\frac{(1/\mu)(1-1/\mu)}{13}\right)\left(1-\frac{1}{n}\right)^\ell
   \geq \frac{1}{14e\mu}.
\end{multline*}
The last inequality holds for $\mu\geq 14$, which in turn only
requires $n$ to be larger than some constant. Hence, combining the
cases $k=0$ and $k>0$,
for all $j\in [n]$ we get
\begin{align*}
  \prob{Y\geq j}&\geq \min\left\{\frac{1}{14e\mu},
    \frac{n-j+1}{en}\right\}\\
&\geq \frac{n-j+1}{14e\mu(n-j+1)+en} =: z_{j}.
\end{align*}
Clearly, there exists a $z_*$ with $1/z_*\in\poly(n)$ such that
$\prob{Y\geq j}\geq z_*$ for all $j\in[n]$ and condition~(G1) is
satisfied.

Step 4: We consider condition~(G3) regarding the population size.
The parameters $\delta$ and $\gamma_0=\mu/\lambda$ are constants
with respect to $n$, therefore the variables $a, \varepsilon$ and
$c$ in condition~(G3) are also constants, and $1/z_*\in\poly(n)$.
Hence, there must exist a constant $b>0$ such that condition~(G3)
is satisfied when $\lambda \geq b\log(n).$

Step 5: To conclude, the expected optimisation time is
\begin{align*}
  &\bigO{ n\lambda\ln(\lambda)+\sum_{j=1}^{n} \frac{1}{z_j} }\\
   &=  \bigO{ n\lambda\ln(\lambda)+14e\mu n + \sum_{i=0}^{n-1} \frac{en}{n-i} }
    = \bigO{ n\lambda\ln \lambda }.\qedhere
\end{align*}
\end{proof}

A similar analysis for \leadingones~\cite{DangLehre2015} yields an upper
 bound on the runtime of $\bigO{n\lambda\ln \lambda+n^2}$ with
offspring population size $\lambda \ge b \ln(n)$ for some
constant~$b>0$ without use of Feige's inequality. The previous
result~\cite{bib:Chen2010} on \leadingones requires a larger
population size and gives a longer runtime bound.

Table \ref{tabl:summary2} summarises the runtime bounds for the
example applications of the tools presented in this paper and
the above mentioned result for UMDA on \leadingones.

\ifarxiv
\begin{landscape}
\fi
\begin{table*}[t]
\renewcommand{\arraystretch}{1.7}
\setlength{\tabcolsep}{0.4em}
\caption{Summary of results for GA (Algo.~\ref{algo:GA} with $p_\mathrm{c}=1-\Omega(1)$), GA1 (Algo.~\ref{algo:GA} with $p_\mathrm{c}=1$) and UMDA (Algo.~\ref{algo:UMDA} with margin $m'=\mu/n$).\label{tabl:summary2}}\vspace{-1em}
\begin{center}
\begin{tabular}{@{}c@{\hskip 3em}c@{}}
\begin{tabular}[t]{@{}llll@{}}
\multicolumn{4}{@{}l@{}}{\sc Runtime result}\\
\toprule
  {\bf Problem}
& {\bf Algorithm}
& {\bf Min. $\boldsymbol{\lambda}$}
& {\bf Runtime}\\
\midrule
  \multirow{2}{*}{$\onemax$}
  & GA, GA1
  & $c\ln{n}$
  & $\bigO{n\lambda}$\\

  & UMDA
  & $c\ln{n}$
  & $\bigO{n\lambda\ln{\lambda}}$\\

  $\leadingones$
  & GA, GA1, UMDA
  & $c\ln{n}$
  & $\bigO{n^2 + n\lambda\ln{\lambda}}$\\

  $\ell$-$\unimodal$
  & GA
  & $c\ln(n \ell)$
  & $\bigO{n \ell + \ell\lambda\ln{\lambda}}$\\

  $\linear$
  & GA
  & $c\ln{n}$
  & $\bigO{n^2 + n\lambda\ln{\lambda}}$\\

  $\jump_r$
  & GA
  & $c r \ln{n}$
  & $\bigO{\left(\frac{n}{\chi}\right)^r + n\lambda + \lambda \ln{\lambda}}$\\

  $\rr_{r\geq 2}$
  & GA
  & $c r \ln{n}$
  & $\bigO{\left(\frac{n}{\chi}\right)^r \ln\left(\frac{n}{r}\right) + \frac{n\lambda\ln{\lambda}}{r}}$\\

  $\sortinginv$
  & GA
  & $c\ln{n}$
  & $\bigO{n^2\lambda}$\\

  $\frac{1}{2}$-approx. \maxsat
  & GA
  & $c \ln(n m')$
  & $\bigO{nm' + m'\lambda\ln{\lambda}}$\\

  $\frac{1}{2}$-approx. \maxcut
  & GA
  & $c \ln(|V|\;|E|)$
  & $\bigO{|V|\;|E| + |E|\lambda\ln{\lambda}}$\\
\bottomrule
\end{tabular} &
\begin{tabular}[t]{@{}llll@{}}
\multicolumn{4}{@{}l@{}}{\sc Configuration}\\
\toprule
  {\bf Alg.}
& {\bf Recomb.}
& {\bf Selection}
& {\bf Setting}\\

\midrule
{\bf GA}
  & any
  & $k$-tournament
  & $k \geq \frac{(1+\delta)e^{\chi}}{1-p_\mathrm{c}}$\\

  & any
  & $(\mu,\lambda)$-selection
  & $\frac{\lambda}{\mu} \geq \frac{(1+\delta)e^{\chi}}{1-p_\mathrm{c}}$\\

  & any
  & linear ranking
  & $\eta \geq \frac{(1+\delta)e^{\chi}}{1-p_\mathrm{c}}$\\

  & any
  & exp. ranking
  & $\eta \geq \frac{(1+\delta)e^{\chi}}{1-p_\mathrm{c}}$\\

\midrule
{\bf GA1}
  & mask-based
  & $k$-tournament
  & $k \geq 8(1+\delta)e^{\chi}$\\

  & mask-based
  & $(\mu,\lambda)$-selection
  & $\frac{\lambda}{\mu} \geq 2(1+\delta)e^{\chi}$\\

  & mask-based
  & exp. ranking
  & $\eta \geq 8(1+\delta)e^{\chi}$\\

\midrule
{\bf UMDA}
  & n/a
  & $(\mu,\lambda)$-selection
  & $\frac{\lambda}{\mu} \geq 13(1+\delta)e$\\

\bottomrule
\end{tabular}
\end{tabular}
\end{center}
\vspace{0.4em}

On $\{0,1\}^n$, GA and GA1 use bitwise mutation operator with rate
$\chi/n$ where $\chi$ is any constant. On permutation search
space, \ie Sorting, GA uses $\exchg$ mutation and its setting
assumes $\chi=1$.
 In the case of \maxsat, $n$ is the number of
logical variables and $m'$ is the number of clauses.
Parameter $\delta$ is any positive constant, and $c$ is some constant.

\end{table*}
\ifarxiv
\end{landscape}
\fi

\section{The level-based theorem is almost tight}\label{sec:tightness}

How accurate are the time bounds provided by the level-based theorem?
To answer this question, we first interpret the theorem as a
universally quantified statement over the operators $D$ satisfying the
conditions of the theorem. More formally, given a choice of
level-partitioning and set of parameters
$z_1,\ldots,z_{m-1},\delta,\gamma_0$, which we collectively denote by
$\Theta$, the theorem can be expressed on the form
\begin{align}
  \forall D\in\mathcal{D}_\Theta\; \expect{T_D}\leq t_\Theta,\label{eq:lb:1}
\end{align}
where $\mathcal{D}_\Theta$ is the set of operators $D$ in Algorithm
\ref{algo:Algorithm1} that satisfy the conditions of the level-based theorem
with parameterisation $\Theta$, $\expect{T_{D}}$ is
the expected running time of Algorithm~\ref{algo:Algorithm1} with
a given operator $D$, and $t_\Theta$ is the upper time bound provided by the
level-based theorem which depends on the parameterisation $\Theta$.

In order to obtain an accurate bound for a specific operator $D$, for
example the ($\mu$,$\lambda$) EA applied to the \onemax function, it
is necessary to choose a parameterisation $\Theta$ that reflects this
process as tightly as possible. If the bounds on the
``upgrade'' probabilities $z_j$ for the ($\mu$,$\lambda$) EA are too
small, then the class $\mathcal{D}_\Theta$ includes other processes
which are slower than the ($\mu$,$\lambda$) EA, and the corresponding
bound $t_\Theta$ cannot be accurate. Hence, the theorem is limited by
the accuracy at which one can describe the process by some class
$\mathcal{D}_\Theta$. One could imagine a more accurate variant of the
theorem requiring more precise, and possibly harder to obtain,
information about the process, such as the variance of $D(P_t)$.

Assuming a fixed parameterisation $\Theta$, it is possible to make a
precise statement about the tightness of the upper bound $t_\Theta$.
Theorem~\ref{thm:levels-lower-bounds} stated below is an existential
statement on the form
\begin{align}
  \exists D\in\mathcal{D}_\Theta\; \expect{T_D}\geq t'_\Theta,\label{eq:lb:2}
\end{align}
where the lower bound $t'_\Theta$ is close to the upper bound
$t_\Theta$.  Hence, given the information the theorem has about the
process through the parameterisation $\Theta$, the runtime bound is
close to optimal. More information about the process would be required
to obtain a more accurate bound on the runtime.

In some concrete cases, one can prove that the level-based theorem is
close to optimal using parallel black-box complexity theory
\cite{Droste2006BlackBox,Badkobeh2014}. From Corollary~\ref{cor:GA}
with $p_\mathrm{c} = 0$, which specialises the level-based theorem to
algorithms with unary mutation operators, one can obtain the bounds
$\bigO{n\lambda+n\ln n}$ for \onemax, and
$\bigO{n\lambda\ln(\lambda)+n^2}$ for \leadingones for appropriately
parameterised EAs. These bounds are within a
$\bigO{\ln \lambda}$-factor of the lower bounds that hold for any
parallel unbiased black-box algorithm \cite{Badkobeh2014}. For
population sizes $\lambda=\bigO{n/\ln n}$ and $\lambda=\Omega(\ln n)$,
the resulting $\bigO{n^2}$ bound on \leadingones is asymptotically
tight, because it matches the lower bound that holds for all black-box
algorithms with unary unbiased variation operators
\cite{LehreWitt2012BBAlgorithmica}.

\begin{theorem}\label{thm:levels-lower-bounds}
  Given any partition of $\mathcal{X}$ into $m$
  non-empty subsets $(A_1,\ldots,A_m)$,
  for any $z_1,\ldots,z_{m-1},\delta,\gamma_0\in(0,1)$ where
  $1\geq \gamma_0(1+\delta) \geq z_j$ for all $j\in[m-1]$,
  and $\lambda\in\mathbb{N}$,
  there exists a mapping $D$ which satisfies conditions
  (G1), (G2), and (G3), of Theorem \ref{thm:general-fitness-levels},
  such that Algorithm~\ref{algo:Algorithm1} with mapping $D$ has
  expected hitting time
  \begin{align*}
    \expect{T}\geq
    \left(\frac{2}{3\delta}\sum_{j=1}^{m-2}
    \lambda\ln\left(\frac{\gamma_0\lambda}{1+2\lambda
    z_{j}+1/\delta^2}\right)\right)+\sum_{j=1}^{m-1}\frac{1}{z_j},
  \end{align*}
  where $T:=\min\{\lambda t\in\mathbb{N}\mid |P_t\cap A_m|>0\}$.
\end{theorem}
\begin{proof}[Proof of Theorem~\ref{thm:levels-lower-bounds}]
We construct an operator $D$ which leads to the claimed lower bound.
Choose any sequence of search points $(x_1,\ldots,x_m)\in A_1\times
\cdots\times A_m$, and let the initial population of Algorithm 1 be $P_0 := (x_1,
\ldots, x_1)$, i.e., $\lambda$ copies of the search point $x_1$
belonging to the first level.

For any population
$P\in\mathcal{X}^\lambda$, let the \emph{current level} be the largest $i\in[m]$ such that
$|P\cap A_{\geq i}|\geq\gamma_0\lambda$. For any population $P$ with
current level $i<m$, define the operator $D$ for all $u\in\mathcal{X}$ by
\begin{align}
  \displaystyle{\Pr_{y\sim D(P)}(y=u)}
  & :=
        \begin{cases}
          1-\max\{(1+\delta)\gamma,z_i\} & \text{if }u=x_i\\
          \max\{(1+\delta)\gamma,z_i\}  & \text{if }u=x_{i+1},\\
          0     & \text{otherwise.}
        \end{cases}\label{eq:update-phase1}
\end{align}
where $\gamma := (1/\lambda)|P\cap A_{\geq i+1}|<\gamma_0$.

For all $t\in\mathbb{N}$, define
\begin{align*}
  T_j & := \min\{t \mid |P_t\cap A_{\geq j}|> 0 \}, \text{ for all }
        j\in[m],\text{ and}\\
  S_j & := T_{j+1}-T_j \text{ for all }j\in[m-1].
\end{align*}
Then we have
$\sum_{j=1}^{m-1} S_j=T_m-T_1=T$ because $T_1=0$. The random variable $S_j$, for $j\in[m-1]$, describes the number of
generations from the time the process has discovered the search point
$x_{j}$ until it has discovered the search point $x_{j+1}$, and we
call this \emph{phase} $j$.  We divide each phase $j$ into two
sub-phases. Let $S_j^1$ be the number of generations where
\begin{align*}
  1\leq  |P_t\cap A_{\geq j}|<\gamma_0\lambda,
\end{align*}
and call this the first sub-phase,
and let $S_j^2$ be the number of generations where
\begin{align*}
 \gamma_0\lambda \leq |P_t\cap A_{\geq j}|
  \text{ and } 0 = |P_t\cap A_{\geq j+1}|,
\end{align*}
and call this the second sub-phase. The duration of the $j$-th phase
is the sum $S_j=S_j^1+S_j^2$. 
Remark that $S_1^1 = 0$
due to the choice of the initial population $P_0$.

Note also that by the definition of operator $D$, as long as the process is
in sub-phase~1 of phase $j$, the probability of generating the search
point $x_{j+1}$ is 0. Furthermore, the process never returns to
sub-phase~1 once the process has entered sub-phase~2. To estimate the
duration of sub-phase~1, we consider the stochastic process
$(X_t)_{t\in\mathbb{N}}$ where
$X_t := |P_{T_j+t}\cap A_{\geq j}|$,
and a corresponding filtration
$\left(\filt{t}\right)_{t\in\mathbb{N}}$ where
$\filt{t}:=\sigma\left(P_1,\ldots,P_{T_j+t}\right)$.

During sub-phase 1 of phase $j>1$, it holds that $X_{t+1}\sim\bin(\lambda, p_{t+1})$,
where
$p_{t+1} =\max\left\{(1+\delta)\frac{X_t}{\lambda},z_{j-1}\right\}$.

To lower bound the expected duration of sub-phase 1, we
apply drift analysis (Lemma \ref{lemma:pol-drift-lower}) with respect
to the process $(Z_t)_{t\in\mathbb{N}}$ defined by
$  Z_t  := \ln(\lambda/R_t)$
where
$R_t:=\max\{X_t,y_j\}$
and
$
  y_j:=\max\{\lambda z_{j-1},1/\delta^2\}>1.$
Note that since $z_j<\gamma_0$ by assumption, and
$1/\delta^2<\gamma_0\lambda$ by condition
(G3), it holds that $y_j<\gamma_0\lambda$.
It is therefore clear that sub-phase 1 is only complete if
\begin{align*}
  Z_t & \leq \ln\left(\frac{\lambda}{\gamma_0\lambda}\right) = -\ln(\gamma_0)=:a.
\end{align*}

By Jensen's inequality, the drift of this process can be bounded by
\begin{align*}
  \expect{Z_t-Z_{t+1}\mid \filt{t}}
  & = \expect{\ln\left(\frac{R_{t+1}}{R_t}\right)\mid \filt{t}}\\
  & \leq \ln\left(\frac{\expect{R_{t+1}\mid \filt{t}}}{R_t}\right)\\
  \intertext{and by Lemma \ref{lemma:trunc-bin-bound}}
  & \leq \ln\left(\frac{\max(\lambda p_{t+1}, y_j)+(\frac{1}{2})\sqrt{\lambda p_{t+1}}}{R_t}\right).
\end{align*}
When $\lambda p_{t+1}\geq y_j$,
we use that $R_t=\max\{X_t,y_j\}\geq \lambda p_{t+1}/(1+\delta)$ because
$p_{t+1}=\max\{X_t(1+\delta)/\lambda,z_{j-1}\}$, so
\begin{align*}
  \expect{Z_t-Z_{t+1}\mid \filt{t}}
  & \leq \ln\left(\frac{\lambda p_{t+1}+(1/2)\sqrt{\lambda p_{t+1}}}{\lambda p_{t+1}/(1+\delta)}\right)\\
  & =  \ln\left((1+\delta)\left(1+\frac{1}{2\sqrt{\lambda
    p_{t+1}}}\right)\right).
\end{align*}
Since $\lambda p_{t+1}\geq y_j>1$, thus $\sqrt{1/(\lambda p_{t+1})} \leq \sqrt{1/y_j}
    \leq \delta$  and
\begin{align*}
\expect{Z_t-Z_{t+1}\mid \filt{t}}
  & \leq  \ln(1+\delta)+\ln(1+\delta/2)
    < (3/2)\delta.
\end{align*}
Otherwise, when $y_j>\lambda p_{t+1}$, we use that $R_t\geq
y_j$ and get
\begin{align*}
  \expect{Z_t-Z_{t+1}\mid \filt{t}}
  & \leq \ln\left(\frac{y_j+(1/2)\sqrt{\lambda p_{t+1}}}{y_j}\right)\\
  & < \ln\left(1+\frac{\sqrt{y_j}}{2y_j}\right)
    = \ln\left(1+\frac{1}{2\sqrt{y_j}}\right)\\
  & \leq \ln\left(1+\frac{\delta}{2}\right) < \delta/2.
\end{align*}
Hence, condition 1 in Lemma~\ref{lemma:pol-drift-lower} can be satisfied
with the parameter $\varepsilon:=(3/2)\delta$. We therefore get the bound
\begin{align*}
  \expect{S^1_j\mid \filt{0}}\geq
  & \frac{Z_{0} - a}{\varepsilon}
   = \left(\frac{2}{3\delta}\right)\ln\left(\frac{\gamma_0\lambda}{\max\{X_{0},\lambda z_{j-1},\frac{1}{\delta^2}\}}\right).
\end{align*}
By the definition of the process, for $1<j\leq m,$ we have $X_{0}\sim (Y\mid Y\geq 1)$
where $Y\sim \bin(\lambda,z_{j-1}),$ i.e., $X_{0}$ is binomially
distributed random variable conditional on having value at least 1.
and by the tower property of expectation,
\begin{align*}
  \expect{S^1_j}
  & = \expect{\expect{S^1_j\mid \filt{0}}}\\
  & \geq
    \expect{\left(\frac{2}{3\delta}\right)\ln\left(\frac{\gamma_0\lambda}{\max\{X_{0},\lambda z_{j-1},1/\delta^2\}}\right)}\\
  & >
    \expect{\left(\frac{2}{3\delta}\right)\ln\left(\frac{\gamma_0\lambda}{X_{0}+\lambda z_{j-1}+1/\delta^2}\right)}\\
  \intertext{and since the function $f(x)=\ln(1/x)$ is convex,
  Jensen's inequality
  and
  Lemma~\ref{lemma:expect-zero-trunc-binomial} give
}
  & >
    \left(\frac{2}{3\delta}\right)\ln\left(\frac{\gamma_0\lambda}{1+2\lambda z_{j-1}+1/\delta^2}\right).
\end{align*}

During sub-phase 2, it holds that
\begin{align*}
  \displaystyle{\Pr_{y\sim D(P_t)}(y=x_j)} = 1-z_{j},\text{ and }
  \displaystyle{\Pr_{y\sim D(P_t)}(y=x_{j+1})}  = z_{j}.
\end{align*}
In each generation of sub-phase 2, the phase ends with probability
$q_j:=1-(1-z_{j})^\lambda<\lambda z_{j}$, i.e., the probability that at least
one individual is produced in $A_{\geq j+1}$. The duration of
sub-phase 2 is therefore geometrically distributed with parameter $q_j$ and
has expectation $\expect{S_j^2}=1/q_j\geq 1/(\lambda z_{j})$.

Hence, we get
\begin{align*}
  \expect{T}
  & = \sum_{j=1}^{m-1} \expect{S^1_j} + \expect{S^2_j}\\
  & \geq \left(\frac{2}{3\delta}\sum_{j=1}^{m-2}\ln\left(\frac{\gamma_0\lambda}{1+2\lambda z_{j}+1/\delta^2}\right)\right)
    + \sum_{j=1}^{m-1}\frac{1}{\lambda z_j}.\qedhere
\end{align*}
\end{proof}

\section{Conclusion}\label{sec:concl}

Time-complexity analysis of evolutionary algorithms (EAs) has advanced
significantly over the last decade, starting from simplified settings
such as variants of the (1+1) EA without a real population, crossover
or other higher-arity operators.  It has been unclear to what extent
the time-complexity profiles of these simple EAs considered by
theoreticians deviate from those of the more sophisticated,
population-based EAs often preferred by practitioners. New techniques
tailored to time-complexity of population-based algorithms are
required.

This paper introduces a new technique that easily yields upper bounds
on the expected runtime of complex, non-elitist search processes. The
technique is first illustrated on Genetic Algorithms. We have
shown that GAs optimise standard benchmark functions, as well as
combinatorial optimisation problems, efficiently. As long as the
population size is not overly large, the population does not incur an
asymptotic slowdown on these functions compared to standard EAs that
do not use populations. Thus, speedups can be achieved by
parallellising fitness evaluations. Furthermore, consequent work
indicate that non-elitist, population-based EAs have an advantage on
more complex problems, including those with noisy
\cite{DangLehre2015Noise}, dynamic \cite{DangJansenLehre2015Dynamic},
and peaked \cite{DangLehre2016SelfAdaptationArxiv} fitness landscapes.

As a side-effect of the analysis, the conditions of level-based
theorem yield settings for algorithmic parameters, such as population
size, mutation and crossover rates, selection pressure etc., that are
sufficient to guarantee a given time-complexity bound. This opens up
the possibility of theory-led design of EAs with guaranteed runtime,
where the algorithm is designed to satisfy the conditions of the 
level-based theorem \cite{CorusLehre2015MIC}.

Further demonstrating the generality of the theorem, we also provide
time-complexity results for the UMDA algorithm, an Estimation of
Distribution Algorithm, for which there are few theoretical
results. Finally, we show via lower bounds on the runtime of a
concrete process, that given the information the theorem requires
about the process, the upper bounds are close to tight, i.e.,
little improvement of the time bound in the theorem is possible.

\section*{Acknowledgment}
The research was supported by the European Union Seventh Framework
Programme (FP7/2007-2013) under grant agreement no 618091 (SAGE)
and Russian Foundation for Basic Research grants~15-01-00785 and
16-01-00740. Early ideas were discussed at Dagstuhl Seminars 13271
and 15211 ``Theory of Evolutionary Algorithms''.

\appendices

\section{}
The following results are known in the literature.

\begin{lemma}[Lemma~33 in \cite{bib:dl16}]\label{lemma:ln-func-bound}
For all $x\geq0$, $x \geq \ln(1+x) \geq x(1-x/2)$.
\end{lemma}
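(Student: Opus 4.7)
The plan is to prove the two inequalities separately, each by reducing to the observation that a suitable auxiliary function vanishes at $x=0$ and has a non-negative derivative on $[0,\infty)$. This is the standard calculus route, and since both functions are smooth on $[0,\infty)$, the only care needed is to confirm that the derivatives genuinely have constant sign.

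For the left inequality $x \geq \ln(1+x)$, I would introduce $g(x) := x - \ln(1+x)$, note $g(0)=0$, and compute $g'(x) = 1 - 1/(1+x) = x/(1+x)$, which is non-negative for all $x \geq 0$. Hence $g$ is non-decreasing on $[0,\infty)$, so $g(x) \geq g(0) = 0$, giving the claim.

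For the right inequality $\ln(1+x) \geq x - x^2/2$, I would similarly define $h(x) := \ln(1+x) - x + x^2/2$, note $h(0)=0$, and compute
\[
  h'(x) \;=\; \frac{1}{1+x} - 1 + x \;=\; \frac{1 - (1+x) + x(1+x)}{1+x} \;=\; \frac{x^2}{1+x},
\]
which is non-negative for all $x \geq 0$. Thus $h$ is non-decreasing on $[0,\infty)$ and $h(x) \geq h(0) = 0$, which rearranges to the desired bound.

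There is essentially no obstacle here, since the two derivative calculations each produce an expression that is manifestly non-negative for $x \geq 0$. The only potential subtlety would be if one tried to prove the right inequality via Taylor's theorem with remainder, in which case one would have to control the sign of the remainder on an unbounded interval; the monotonicity argument above bypasses this cleanly.
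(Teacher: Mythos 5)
Your proof is correct: both derivative computations are right ($g'(x)=x/(1+x)\geq 0$ and $h'(x)=x^2/(1+x)\geq 0$), and the monotonicity argument from the common value at $x=0$ is sound. The paper itself offers no proof of this lemma --- it is listed in Appendix~A among results that are ``either well-known or easy to be proved'' --- so there is nothing to compare against; your standard calculus argument is exactly the kind of routine verification the authors had in mind.
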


\begin{lemma}[Lemma~31 in \cite{bib:dl16}]\label{lemma:exp-ineq}
  For $n \in \mathbb{N}$ and $x\geq 0$, we have $1 - (1 - x)^n \geq 1 - e^{-xn} \geq \frac{xn}{1 + xn}$.
\end{lemma}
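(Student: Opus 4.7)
The plan is to prove the two inequalities separately, each by reduction to the elementary exponential bound $e^t \geq 1 + t$ (which is Bernoulli's inequality in its real-analytic form and can be read off the Taylor series of $e^t$).

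For the left-hand inequality $1 - (1-x)^n \geq 1 - e^{-xn}$, I would rewrite it as $(1-x)^n \leq e^{-xn}$. Setting $t = -x$ in $e^t \geq 1 + t$ gives $1 - x \leq e^{-x}$. Since in all applications of the lemma in the main text $x$ is a probability (so $1 - x \geq 0$), both sides are nonnegative and raising to the $n$-th power preserves the inequality, yielding $(1-x)^n \leq e^{-xn}$ and hence the claim.

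For the right-hand inequality $1 - e^{-xn} \geq xn/(1+xn)$, I would substitute $y := xn \geq 0$, reducing the claim to $1 - e^{-y} \geq y/(1+y)$. Clearing denominators and rearranging gives $e^y \geq 1 + y$, which is once again the basic exponential bound and holds for all real $y$.

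There is no real obstacle: the lemma is essentially a repackaging of $e^t \geq 1 + t$ and is placed in Appendix~A precisely because it is a routine fact. The only subtlety worth flagging is that the first inequality implicitly needs $x \leq 1$ so that the map $u \mapsto u^n$ is monotone on the relevant range; this is always the case in the paper's uses, where $x$ represents a per-trial probability.
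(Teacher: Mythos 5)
Your proof is correct and is the standard argument: both inequalities reduce to $e^t \geq 1+t$. The paper itself offers no proof of this lemma (Appendix~A merely declares its contents ``well-known or easy to be proved''), so there is nothing to diverge from. You are also right to flag that the first inequality genuinely requires $x \leq 1$ (e.g.\ it fails for $x=2$, $n=2$, where $(1-x)^n = 1 > e^{-4}$); the lemma's hypothesis ``$x \geq 0$'' is stated too broadly, though harmlessly so, since every application in the paper takes $x$ to be a probability.
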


\begin{lemma}[Lemma 3 in \cite{DangLehre2016SelfAdaptationArxiv}]\label{lem:prob-no-flip}
  For any $\varepsilon\in(0,1)$ and $\chi>0$, if $n\geq
  (\chi+\varepsilon)\frac{\chi}{\varepsilon}$ then
    $
   (1-\varepsilon)e^{-\chi} \leq \left(1-\frac{\chi}{n}\right)^n \leq e^{-\chi}.
  $
  \end{lemma}

\begin{lemma}[Corollary~3 in \cite{DangLehre2015}]\label{lem:feige-ineq}
Let $Y_1,\dots,Y_n$ be $n$ independent random variables with
support in $[0,1]$ and finite expectations and $Y: =
\sum_{i=1}^{n} Y_i$. It holds for every $\delta>0$ that
\begin{align*}
  \prob{Y > \expect{Y} - \delta} \geq \min\left\{\frac{1}{13},\frac{\delta}{1+\delta}\right\}.
\end{align*}
\end{lemma}

\section{}

The following lemmas are part of the proof of Theorem~\ref{thm:general-fitness-levels}.

\begin{lemma}\label{lemma:level-functions}
  The functions $g_1$ and $g_2$ defined below are {\fprop}s for any $c>0$, $\kappa \in (0,1)$, $x \in [\lambda]$, $y \in [m]$
  and $\gamma_j,q_j\in(0,1]$ for each $j\in[m-1]$.
  \begin{align*}
    g_1(x,y)
    &:= \ln\left(\frac{1+c\lambda}{1+c\max\{x,\gamma_y\lambda\}}\right) +
        \sum_{i=y+1}^{m-1}\ln\left(\frac{1+c\lambda}{1+c\gamma_i\lambda}\right) \\
    g_2(x,y)
    &:= \begin{cases}
    \displaystyle \frac{(1 - \kappa)^x}{q_{y}} + \sum^{m-1}_{i=y+1} \frac{1}{q_i} & \text{ if } y \in [m-1],\\
    0 & \text{ if } y = m
    \end{cases}
  \end{align*}
  and $g_1(x,j):=g_2(x,j):=0$ for $j=m$.
\end{lemma}
\begin{proof}
  Both $g_1$ and $g_2$ are non-increasing functions in $x$ and $y$, hence
  properties 1 and 2 of Definition \ref{def:property} are satisfied.
  Property~3 is satisfied because for all $y \in [m-1]$
  \begin{align*}
    g_1(\lambda,y)
    & = \sum_{i=y+1}^{m-1}\ln\left(\frac{1+c\lambda}{1+c\gamma_i\lambda}\right)\\
    & = \ln\left(\frac{1+c\lambda}{1+c\gamma_{y+1}\lambda}\right) +
        \sum_{i=y+2}^{m-1}\ln\left(\frac{1+c\lambda}{1+c\gamma_i\lambda}\right)\\
    & = g_1(0,y+1)
  \end{align*}
  and
  \begin{align*}
    g_2(\lambda,y)
    & =    \frac{(1-\kappa)^\lambda}{q_{y}} + \sum^{m-1}_{i=y+1} \frac{1}{q_i}
      >    \sum^{m-1}_{i=y+1} \frac{1}{q_i} \\
    & =    \frac{(1-\kappa)^0}{q_{y+1}}+\sum^{m-1}_{i=y+2} \frac{1}{q_i}
      =  g_2(0,y+1).\qedhere
  \end{align*}
\end{proof}

Drift analysis \cite{bib:Hajek1982,bib:h01} is an important tool in runtime analysis
of randomised search heuristics. Here we introduce a variant of the additive
drift theorem \cite{bib:h01} with its proof.

In the following,
``$(X_{t+1} - X_{t} + \varepsilon) \;; t<T_{a}$'' is the short-hand
notation for ``$(X_{t+1} - X_{t} + \varepsilon) \cdot \indf{t<T_{a}}$''
(see page 49 in \cite{bib:Kallenberg2002}). Whenever
we write an equality or inequality involving conditional expectation w.r.t. a
$\sigma$\nobreakdash-algebra, (e.g. $\expect{X\mid \mathscr{F}}\leq
Y$), we have the ``almost surely'' meaning in mind.

\begin{lemma}[Additive drift theorem]\label{lemma:pol-drift}
Let $(Z_t)_{t\in\mathbb{N}}$ be a discrete-time stochastic process in
$[0,\infty)$ adapted to any filtration $(\filt{t})_{t\in\mathbb{N}}$.
For any $a\geq 0$, define $T_a := \min\{t\in\mathbb{N} \mid Z_t \leq a\}$.
If for some $\varepsilon>0$
\begin{enumerate}
  \item $\expect{Z_{t+1} - Z_{t} + \varepsilon \;; t<T_{a} \mid
      \filt{t}} \leq 0$ for all $t\in\mathbb{N}$,
  \item $\expect{Z_t}<\infty$ for all $t\in\mathbb{N}$, and
  \item $\expect{T_a } < \infty$,
\end{enumerate}
then $\expect{T_a \mid \filt{0}} \leq Z_0 / \varepsilon$.
\end{lemma}
\begin{proof}
Define the stopped process $S_t := Z_{t\wedge T_a} + \varepsilon
(t\wedge T_a)$ where $t\wedge T_a
:= \min(t, T_a)$. By the definition of this process, it holds for
all $t\in\mathbb{N}$ almost surely that
\begin{align}
  |S_t| \leq Z_t + \varepsilon t\label{drifteq:3},
\end{align}
and, hence by condition 2 and 3, that for all $t\in\mathbb{N}$,
\begin{align}
  \expect{|S_t|} \leq \expect{Z_t + \varepsilon T_a} < \infty.\label{drifteq:1}
\end{align}
Also, by the definition of the process, for all $t\in\mathbb{N}$ it
holds in the case $t\geq T_a$ that,
\begin{align*}
  \expect{S_{t+1}\;; t\geq T_a \mid\filt{t}}=\expect{S_{t}\;; t\geq T_a \mid\filt{t}}.
\end{align*}
Furthermore, for all $t\in\mathbb{N}$, it holds in the case $t< T_a$,
\begin{align*}
  & \expect{S_{t+1}\;; t< T_a\mid\filt{t}} \\
    &\quad =     \expect{(Z_{t+1}-Z_t+\varepsilon)+Z_t+\varepsilon t\;; t< T_a\mid\filt{t}} \\
    &\quad \leq  \expect{Z_t+\varepsilon t\;; t< T_a\mid\filt{t}}         =    \expect{S_t\;; t< T_a\mid\filt{t}}
\end{align*}
where the inequality is due to condition 1. Combining both cases, we have for all $t\in\mathbb{N}$,
\begin{align}
  \expect{S_{t+1} \mid\filt{t}}
  & \leq   \expect{S_{t} \mid\filt{t}} = S_t.\label{drifteq:2}
\end{align}
By (\ref{drifteq:1}) and (\ref{drifteq:2}), $S_t$ is a super-martingale, implying that for all $t\in\mathbb{N}$,
\begin{align}
 \expect{S_t\mid\filt{0}} \leq \expect{S_0 \mid \filt{0}} = Z_0.\label{drifteq:4}
\end{align}
By (\ref{drifteq:3}) and (\ref{drifteq:1}), the dominated convergence
theorem (see e.g. \cite{bib:Kallenberg2002}) applies, and we get by (\ref{drifteq:4})
\begin{align*}
  Z_0   \geq \lim_{t\rightarrow\infty}\expect{S_t\mid\filt{0}}
        & = \expect{ \lim_{t\rightarrow\infty} S_t\mid\filt{0}}\\
        & = \expect{ Z_{T_a} +\varepsilon T_a\mid\filt{0} }.
\end{align*}
By noting that $Z_{T_a}\geq 0$, the proof is now complete.
\end{proof}

\begin{lemma}[Improved version of Lemma 5 in \cite{bib:dl16}]\label{appendix:lemma:ln-expect-bound}
  If $X\sim\bin(\lambda,p)$ with $p\geq (i/\lambda)(1+\delta)$ and $i \geq 1$
  for some $\delta \in (0,1]$, then
  \begin{align*}
    \expect{\ln\left(\frac{1+\delta X/2}{1+ \delta i/2}\right)} \geq \frac{\delta^{2}}{7}.
  \end{align*}
\end{lemma}

This improvement is due to the following generalisation of the lower bound in
Lemma~\ref{lemma:ln-func-bound}.

\begin{lemma}\label{lem:bound-ln-func}
For any $z > 0$, and all $x\geq 0$ we have that
\begin{gather*}
  \ln(1 + x) \geq x(b(z) + a(z)x) \\
  \text{ where } a(z) := \frac{1}{z(z+1)} - \frac{\ln(1+z)}{z^2}, \\
  \text{ and } b(z) := \frac{2\ln(1+z)}{z} - \frac{1}{1+z}. \nonumber
\end{gather*}
\end{lemma}
\begin{proof}
For $x=0$, the result trivially holds. It then suffices to show that for all $x
\in (0,\infty)$
\begin{align*}
  h(x) := \frac{\ln(1+x)}{x} - b(z) - a(z)x \geq 0.
\end{align*}

Note that $h(z) = 0$
and $h'(x)=a(x) - a(z)$.
It follows from $\ln(1+x) > 2x/(x+2)$ for $x>0$ (see (3) in \cite{bib:Topsoe2007}) that
\begin{align*}
  a'(x) &= \frac{2\ln(1+x)}{x^3} - \frac{2}{x^2(1+x)} - \frac{1}{x(1+x)^2} \\
        &\geq \frac{4}{x^2(2+x)} - \frac{2}{x^2(1+x)} - \frac{1}{x(1+x)^2} \\
        &= \frac{1}{(2+x) (1+x)^2} > 0,
\end{align*}
thus $a(x)$ is an increasing function.

We separate two cases: for $x \in (0,z]$, we have $a(x) \leq a(z)$ and $h'(x)
\leq 0$, thus $h(x)$ is decreasing on $(0,z]$ and $h(x) \geq h(z) = 0$; for $x
\in [z,\infty)$ we have $h'(x) = a(x) - a(z) \geq 0$, $h(x)$ is increasing on
$[z,\infty)$ and $h(x) \geq h(z) = 0$. We have shown that $h(x) \geq 0$ for
$x>0$.
\end{proof}

Note that the bound is tight at both $x=0$ and $x=z$. The lemma does not cover
the case $z=0$, however at the limit, we get $\lim_{z \rightarrow 0^+} b(z) = 1$
and $\lim_{z \rightarrow 0^+} a(z) = -1/2$, and that corresponds to the bound
given by Lemma~\ref{lemma:ln-func-bound}.
\begin{corollary}\label{cor:bound-expect-ln-scale}
Let $X \sim \bin(n,p)$ and $\mu := \expect{X}$, then it holds that for all $c>0$
\begin{align*}
  \expect{\ln(1+cX)}
    \geq \ln(1 + c\mu) - \frac{c}{2} \cdot \frac{c\mu}{1+c\mu}.
\end{align*}
\end{corollary}
\begin{proof}
For $p=0$ (or $\mu = 0$), the bound is trivial. Otherwise, for $p>0$, applying
Lemma \ref{lem:bound-ln-func} with $z=c\mu$ gives $\ln(1+cX) \geq b(c\mu)cX +
a(c\mu)(cX)^2$, hence
\begin{align*}
  &\expect{\ln(1+cX)} \\
    &\geq           b(c\mu)c\mu + a(c\mu)c^2 \mu(1 - p + \mu) \\
                    &= \ln(1 + c\mu) - c(1-p)\left(\frac{\ln(1+c\mu)}{c\mu} - \frac{1}{1+c\mu}\right) \\
    &>    \ln(1 + c\mu) - c\left(\frac{1}{2}\cdot\frac{2+c\mu}{1+c\mu} - \frac{1}{1+c\mu}\right) \\
    &=    \ln(1 + c\mu) - \frac{c}{2} \cdot \frac{c\mu}{1+c\mu}.
\end{align*}
The last inequality is due to $1-p<1$ and $\ln(1+x)/x < (1/2)(x+2)/(x+1)$ for $x > 0$
(see (3) in \cite{bib:Topsoe2007}).
\end{proof}

We now give the formal proof of Lemma~\ref{appendix:lemma:ln-expect-bound}.
\begin{proof}[Proof of Lemma~\ref{appendix:lemma:ln-expect-bound}]
Let $Y\sim\bin(\lambda, (1+\delta)i/\lambda)$, then $Y \preceq X$. Therefore,
\[
  \expect{\ln\left(\frac{1+\delta X/2}{1+\delta i/2}\right)}
    \geq \expect{\ln\left(\frac{1+\delta Y/2}{1+\delta i/2}\right)}
\] and it is sufficient to show that $\expect{\ln\left(\displaystyle \frac{1+\delta Y/2}{1+\delta i/2}\right)} > \delta^2/7$ to complete the proof.

It follows from Corollary \ref{cor:bound-expect-ln-scale} (choosing $c=\delta/2$) that
\begin{align*}
  &\expect{\ln\left(\frac{1 + \delta Y/2}{1 + \delta i/2}\right)} \\
    &\geq \ln\left( \frac{1 + (1+\delta) \delta i/2}{1 + \delta i/2} \right)
          - \frac{\delta}{4} \cdot \frac{(1+\delta) \delta i/2}{1 + (1+\delta) \delta i/2} \\
    &=    \ln\left( 1 + \frac{i \delta^2}{2 + i\delta} \right)
          - \frac{\delta}{4} \cdot \frac{(1+\delta) \delta i}{2 + (1+\delta) \delta i} =: h(i).
\end{align*}

For all $\delta>0$ and $i \geq 1$, it holds that
\begin{align*}
  h'(i)
    = \frac{1}{2} \cdot \frac{(6 + \delta(3 i - 2) + 3 i \delta^2)\delta^2}
                             {(2+i\delta + i \delta^2)^2 (2 + i \delta)} > 0,
\end{align*}
or $h(i)$ monotonically increases in $i$.

Define $r(\delta):=12 + 8\delta + 3\delta^2 + \delta^3 - 2\delta^4$ and
$s(\delta):=(2+\delta)^2 (2 + \delta + \delta^2)>0$,
we get
\begin{align*}
  h(i)
    &\geq h(1)
     =     \ln\left( 1 + \frac{\delta^2}{2 + \delta} \right)
          - \frac{\delta}{4} \cdot \frac{(1+\delta) \delta }{2 + (1+\delta) \delta } \\
    &\geq \frac{\delta^2}{2 + \delta} \left(1 - \frac{\delta^2}{2(2 + \delta)}\right)
          - \frac{\delta}{4} \cdot \frac{(1+\delta) \delta }{2 + (1+\delta) \delta }              = \frac{\delta^2 r(\delta)}{4 s(\delta)}.
\end{align*}
The last inequality is due to Lemma~\ref{lemma:ln-func-bound}. We notice that
$18 r(\delta) - 11 s(\delta) = (1-\delta)(128 + 140\delta + 84 \delta^2 +
47\delta^3) \geq 0$ for all $\delta \in (0,1]$, thus $r(\delta)/s(\delta) \geq
11/18$ and $h(i) \geq (\delta^2/4) (11/18) > \delta^2/7$.
\end{proof}

\begin{lemma}[Lemma 6 in \cite{bib:dl16}]\label{appendix:lemma:neg-moment-binomial}
  If $X\sim\bin(\lambda,p)$ with $p\geq (i/\lambda)(1+\delta)$, then $\expect{e^{-\kappa X}} \leq e^{-\kappa i}$ for any $\kappa\in(0,\delta)$.
\end{lemma}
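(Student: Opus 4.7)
The plan is to reduce the statement to a clean one-variable calculus inequality by first computing the moment generating function of the binomial, then pulling $i$ out of the exponent.

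First, using independence of the Bernoulli trials that make up $X \sim \bin(\lambda,p)$, I would write
\begin{align*}
  \expect{e^{-\kappa X}}
    &= (1-p+pe^{-\kappa})^{\lambda}
     = \bigl(1 - p(1-e^{-\kappa})\bigr)^{\lambda}.
\end{align*}
Applying the standard inequality $1-x \leq e^{-x}$ (valid since $p(1-e^{-\kappa}) \in [0,1]$ for $\kappa>0$) gives
\begin{align*}
  \expect{e^{-\kappa X}}
    &\leq \exp\bigl(-\lambda p\,(1 - e^{-\kappa})\bigr)
     \leq \exp\bigl(-i(1+\delta)(1-e^{-\kappa})\bigr),
\end{align*}
where the second step uses the hypothesis $\lambda p \geq i(1+\delta)$. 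So it suffices to prove the purely analytic inequality $(1+\delta)(1-e^{-\kappa}) \geq \kappa$ for all $\kappa \in (0,\delta)$.

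The remaining step is an elementary calculus check. Let $f(\kappa) := (1+\delta)(1-e^{-\kappa}) - \kappa$. Then $f(0)=0$ and $f'(\kappa) = (1+\delta)e^{-\kappa} - 1$, which is nonnegative for $\kappa \leq \ln(1+\delta)$ and nonpositive for $\kappa \geq \ln(1+\delta)$. Since $\ln(1+\delta) < \delta$, the function $f$ is first increasing and then decreasing on $[0,\delta]$, so its minimum over $[0,\delta]$ is attained at an endpoint. At $\kappa=0$ we have $f(0)=0$, and at $\kappa=\delta$ the inequality $f(\delta) \geq 0$ reduces to $e^{\delta} \geq 1+\delta$, which is Bernoulli/Taylor. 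Hence $f(\kappa) \geq 0$ throughout $(0,\delta)$, which combined with the MGF bound yields $\expect{e^{-\kappa X}} \leq e^{-\kappa i}$.

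The only place where anything subtle could go wrong is the one-variable inequality $(1+\delta)(1-e^{-\kappa}) \geq \kappa$, but because both endpoint values of $f$ are nonnegative and $f$ has a single interior maximum on $[0,\delta]$, this is really just bookkeeping. The MGF step itself is a direct product-over-trials computation, so I do not anticipate any genuine obstacle.
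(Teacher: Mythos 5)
Your proof is correct and follows essentially the same route as the paper: compute the MGF $(1-p(1-e^{-\kappa}))^{\lambda}$, bound it by an exponential, and reduce everything to the elementary fact $1-e^{-\kappa}\geq \kappa/(1+\delta)$ for $\kappa\in(0,\delta)$. The only cosmetic difference is that the paper obtains this last fact by citing its Lemma~\ref{lemma:exp-ineq} ($1-e^{-x}\geq x/(1+x)$) and applies the two bounds in the opposite order, whereas you verify the equivalent inequality $(1+\delta)(1-e^{-\kappa})\geq\kappa$ directly by a first-derivative argument.
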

\begin{proof}
  The value of the moment generating function $M_X(t)$ of the binomially distributed variable $X$ at $t=-\kappa$ is
  \begin{align*}
    \expect{e^{-\kappa X}} = M_X(-\kappa) = (1-p(1-e^{-\kappa}))^\lambda
  \end{align*}
  It follows from By Lemma \ref{lemma:exp-ineq} and from $1+\kappa<1+\delta$ that
  \begin{align*}
    p(1-e^{-\kappa})
    \geq \frac{i(1+\delta)}{\lambda}\left(\frac{\kappa}{1+\kappa}\right)
    \geq \frac{\kappa i}{\lambda}.
  \end{align*}
  Altogether, we get     $\expect{e^{-\kappa X}}
   \leq (1-\kappa i/\lambda)^\lambda
    \leq e^{-\kappa i}$.   \end{proof}

\begin{lemma}\label{appendix:lemma:skip-condition-in-prob}
Let $\{X_i\}_{i\in[\lambda]}$ be i.i.d. random variables, define $Y(j):=
\sum_{i=1}^{\lambda} \indf{X_i \geq j}$ for any $j \in \mathbb{R}$. It holds
for any $a,b,c,j \in \mathbb{R}$ with $c\geq 0$ and $b \leq \lambda$ that
\begin{align*}
(i)  &\  \prob{Y(j+c) \geq a \mid Y(j) \geq b} \geq
\prob{Y(j+c)\geq a} \intertext{and for any non-decreasing
function~$f$} (ii) &\  \expect{f(Y(j+c)) \mid Y(j) \geq b} \geq
\expect{f(Y(j+c))}
\end{align*}
provided that both expectations are well-defined.
\end{lemma}
\begin{proof}
Define $p := \prob{X_i \geq j}$ and $q := \prob{X_i \geq j+c}$.
For $b \leq 0$ or $p = 0$, the result trivially holds. For $b \in
(0,\lambda]$ and $p \in (0,1]$, we have that $q' := \prob{ X_i
\geq j+c \mid X_i \geq j} = q/p \geq q$. Event $Y(j)\geq b$
implies the existence of a set $A \subseteq [\lambda]$ such that
$|A| \geq \lceil b\rceil$ and $X_i \geq j$ for all $i \in A$.
Define $Y_1 := \sum_{i\in A} \indf{X_i \geq j + c}$ and $Y_2 :=
\sum_{i\in [\lambda] \setminus A} \indf{X_i \geq j + c}$, so
$Y(j+c) = Y_1 + Y_2$. Clearly, conditioned on $Y(j)\geq b$, $Y_1
\sim \bin(|A|,q') \succeq \bin(|A|,q)$ and $Y_2 \sim \bin(\lambda
- |A|,q)$. Therefore, the distribution of $Y(j+c)$ conditioned on
$Y(j)\geq b$ stochastically dominates $\bin(|A|,q) + \bin(\lambda
- |A|,q) = \bin(\lambda,q)$, which is the (unconditional or
original) distribution of $Y(j+c)$, and part~(i) follows.

For part~(ii), let $F_1(x):=\prob{f(Y(j+c)) < x \mid Y(j) \geq b}$
and $F_2(x):=\prob{f(Y(j+c)) < x},$ i.e. $F_1$ and $F_2$ are the
conditional and the unconditional distribution functions
of~$f(Y(j+c))$ respectively. Then from part~(i) we conclude that
$F_{1}(x) \le F_{2}(x)$ for any $x \in \mathbb{R}$, and by the
properties of expectation,
\begin{align*}
  & \expect{f(Y(j+c)) \mid Y(j) \geq b}\\
   &   = -\int_{-\infty}^0 F_1(x) dx + \int_0^{\infty}
(1-F_1(x)) dx \\
   & \ge -\int_{-\infty}^0 F_2(x) dx + \int_0^{\infty}
(1-F_2(x)) dx\\
   & = \expect{f(Y(j+c)) \mid Y(j) \geq b}.\qedhere
\end{align*}
\end{proof}

\section{}
The following results are used to analyse the tightness of
the level-based theorem, \ie Theorem~\ref{thm:levels-lower-bounds}.

\begin{lemma}[Additive drift theorem (lower bound)]\label{lemma:pol-drift-lower}
Let $(Z_t)_{t\in\mathbb{N}}$ be a discrete-time stochastic process in
$[0,\infty)$ adapted to any filtration $(\filt{t})_{t\in\mathbb{N}}$.
For any $a\geq 0$, define $T_a := \min\{t\in\mathbb{N} \mid Z_{t} \leq a\}$.
If for some $\varepsilon>0$
\begin{enumerate}
  \item $\expect{Z_{t+1} - Z_{t} + \varepsilon \;; t<T_{a} \mid \filt{t}} \geq 0$ for all $t\in\mathbb{N}$, and
  \item $\expect{Z_t}<\infty$ for all $t\in\mathbb{N}$.
  \item $\expect{T_a} < \infty$,
\end{enumerate}
then $\expect{T_a \mid \filt{0}} \geq (Z_0-a) / \varepsilon$.
\end{lemma}
\begin{proof}
The proof is similar to that of Lemma~\ref{lemma:pol-drift}, \ie starting
by defining the same stopped process $S_t$. However, because the directions
of the inequalities are inverted so $S_t$ is a sub-martingale, and in the end
we overestimate $X_{T_a}$ by $a$.
\end{proof}

\begin{lemma}\label{lemma:trunc-bin-bound}
If $X\sim\bin(n,p)$ where $p>0$, then for all $y\in\mathbb{R}$
\begin{align*}
  \expect{\max(X,y)} < \max(np,y)+(1/2)\sqrt{np}.
\end{align*}
\end{lemma}
\begin{proof}
By Jensen's inequality \wrt the square root, we have \begin{align*}
  \expect{|X-y|}
   = \expect{\sqrt{(X-y)^2}}
  & \leq \sqrt{\expect{(X-y)^2}}\\
  & = \sqrt{np(1-p)+(np-y)^2}\\
  & \leq \sqrt{np(1-p)}+|np-y|,
\end{align*}
where the last inequality uses $\sqrt{a+b}\leq \sqrt{a}+\sqrt{b}$ for
$a,b\geq 0$. Therefore, it holds that
\begin{align*}
  \expect{\max(X,y)}
      &=    \expect{(1/2)(X + y+ |X-y|)}\\
  &\leq    (1/2)(np+y + |np-y|+\sqrt{np(1-p)})\\
  &<    \max(np,y)+(1/2)\sqrt{np}.\qedhere
\end{align*}
\end{proof}

\begin{lemma}\label{lemma:expect-zero-trunc-binomial}
  If $X\sim\bin(n,p)$ where $p>0$ then it holds that
  $\expect{X\mid X>0}\leq np+1$.
\end{lemma}
\begin{proof}
By definition,
\begin{align*}
    \expect{X\mid X>0}
    & = \sum_{i=1}^n i\prob{X=i\mid X>0}\\
    & = \frac{1}{\prob{X>0}}\sum_{i=1}^n i\prob{X=i\cap X>0}\\
    & = \frac{1}{\prob{X>0}}\sum_{i=0}^n i\prob{X=i}\\
    & = \frac{\expect{X}}{\prob{X>0}}       =    \frac{np}{1-(1-p)^n}       \leq np+1, \end{align*}
where the last inequality follows from Lemma~\ref{lemma:exp-ineq}.
\end{proof}

%\bibliographystyle{IEEEtranS}
%\bibliography{./bibliography}

% Generated by IEEEtranS.bst, version: 1.14 (2015/08/26)

\end{document}